\DeclarePairedDelimiterX\set[1]\{\}{\nonscript\,#1\nonscript\,}
\newtheoremstyle{mystyle}
{}
{}
{\itshape}
{}
{\bfseries}
{.}
{ }
{}
\theoremstyle{mystyle}
\newtheorem{theorem}{Theorem}
\newcounter{subtheoremcounter}
\newtheorem{subtheorem}{Theorem}[subtheoremcounter]
\newtheorem{fact}{Fact}
\newcommand{\bs}{\boldsymbol}
\newcommand{\expectation}{\mathop{\mathbb{E}}}
\newcommand{\indicator}{\mathop{\mathbb{I}}}
\newcommand{\realnumber}{\mathop{\mathbb{R}}}
\newcommand{\tbf}{\textbf}
\newcommand{\etal}{\textit{et al}. }
\newcommand{\ie}{\textit{i}.\textit{e}., }
\newcommand{\eg}{\textit{e}.\textit{g}. }
\title{An Internal Covariate Shift Bounding Algorithm for Deep Neural Networks by Unitizing Layers' Outputs}
\author{
	You Huang\\
	Fuzhou University\\
	\texttt{youhuang0607@gmail.com} \\
	\And
	Yuanlong Yu\\
	Fuzhou University\\
	\texttt{yu.yuanlong@fzu.edu.cn} \\
}
\date{}
\begin{document}
\maketitle

\begin{abstract}
	Batch Normalization (BN) techniques have been proposed to reduce the so-called Internal Covariate Shift (ICS) by attempting to keep the distributions of layer outputs unchanged. Experiments have shown their effectiveness on training deep neural networks. However, since only the first two moments are controlled in these BN techniques, it seems that a weak constraint is imposed on layer distributions and furthermore whether such constraint can reduce ICS is unknown. Thus this paper proposes a measure for ICS by using the Earth Mover (EM) distance and then derives the upper and lower bounds for the measure to provide a theoretical analysis of BN. The upper bound has shown that BN techniques can control ICS only for the outputs with low dimensions and small noise whereas their control is NOT effective in other cases. This paper also proves that such control is just a bounding of ICS rather than a reduction of ICS. Meanwhile, the analysis shows that the high-order moments and noise, which BN cannot control, have great impact on the lower bound. Based on such analysis, this paper furthermore proposes an algorithm that unitizes the outputs with an adjustable parameter to further bound ICS in order to cope with the problems of BN. The upper bound for the proposed unitization is noise-free and only dominated by the parameter. Thus, the parameter can be trained to tune the bound and further to control ICS. Besides, the unitization is embedded into the framework of BN to reduce the information loss. The experiments show that this proposed algorithm outperforms existing BN techniques on CIFAR-10, CIFAR-100 and ImageNet datasets.
\end{abstract}

\section{Introduction}
\label{sec:introduction}
Deep neural networks (DNNs) show good performance in image recognition~\cite{krizhevsky2012imagenet}, speech recognition~\cite{hinton2012deep} and other fields~\cite{tompson2014joint, mnih2013playing} in recent years. However, how to train DNNs is still a fundamental problem which is more complicated than training shallow networks because of deep architectures. It was commonly thought that stacking more layers suffers from the problem of vanishing or exploding gradients ~\cite{glorot2010understanding}, but there are some problems of training with unclear definitions. A problem called \emph{Internal Covariate Shift} (ICS)~\cite{ioffe2015batch} may hinder the convergence of training DNNs.

ICS is derived from \emph{Covariate Shift} (CS) that is caused by using data from two different distributions to respectively train and test a model, generally in the supervised learning process~\cite{shimodaira2000improving}. However, ICS only exists in the feed-forward networks. Considering the $l$th layer of a network with $L$ layers, a stack of following $L - l$ layers forms a local network $f_{l+1:L}$, whose input is the output of the $l$th layer. Thus, the distribution of the input is affected by all the former $l$ layers' weights. In detail, the objective function of $f_{l+1:L}$ is defined as
\begin{equation}
\label{eq:objectivefunction}
\mathcal{L}(\bs{\Theta}_{l+1:L}; {p_{l}^{(t)}}, p_{\bs{y}}) = {\expectation}_{\bs{x} \sim p_l^{(t)}, \bs{y} \sim p_{\bs{y}}(\cdot|\bs{x})} [h(\bs{x}, \bs{y}; \bs{\Theta}_{l+1:L})]
\end{equation}
where $p_{l}^{(t)}$ is the distribution of the $l$th layer's outputs at the $t$th iteration; $p_{\bs{y}}$ is the conditional distribution of the ground truth at the last layer given $\bs{x}$; $\bs{\Theta}_{l+1:L}$ is the weights of $f_{l+1:L}$; $h(\bs{x}, \bs{y}; \bs{\Theta}_{l+1:L})$ is the loss for a sample pair $(\bs{x}, \bs{y})$. We use Back Propagation algorithm to train networks. However, the objective function $\mathcal{L}(\bs{\Theta}_{l+1:L}; {p_{l}^{(t+1)}}, p_{\bs{y}})$ at the $t+1$th iteration would be different from the previous one as the distribution changes from $p_{l}^{(t)}$ to $p_{l}^{(t+1)}$. So using the gradients obtained from $\mathcal{L}(\bs{\Theta}_{l+1:L}; {p_{l}^{(t)}}, p_{\bs{y}})$ to update $\bs{\Theta}_{l+1:L}$ might not reduce $\mathcal{L}(\bs{\Theta}_{l+1:L}; {p_{l}^{(t+1)}}, p_{\bs{y}})$ due to the divergence between $p_{l}^{(t)}$ and $p_{l}^{(t+1)}$. Furthermore, the divergence will become much larger with increase of the number of network layers.

The technique called Batch Normalization (BN)~\cite{ioffe2015batch} has been proposed to reduce ICS by attempting to make the distributions remain unchanged. In practice, BN normalizes the outputs to control the first two moments, \ie, mean and variance, and uses two adjustable parameters to recover the information lost in normalizing outputs. Empirical results have shown that BN can speed up network training and also improve the success rate~\cite{he2016deep, silver2017mastering}. However, whether BN can really reduce ICS is not clear in theory. It is obviously that the first issue is about how to measure the divergence. Furthermore, since BN techniques only control the first and second moments, the constraint imposed on distributions by BN is weak. So how to theoretically analyze the bounding of ICS imposed by BN techniques is the second issue.

Meanwhile, some experiments have shown that the performance gain of BN seems unrelated to the reduction of ICS~\cite{santurkar2018does}. In fact, ICS always exists when we train networks based on the fact that the gradient strategy must give the weight update in order to train the network such that the distribution of each layer varies. Furthermore, in the case of gradient vanishing, the ICS is totally eliminated. However, the network training cannot work. This case illustrates that very slight ICS cannot support effective training. Thus, it seems that controlling ICS instead of eliminating it is effective for training networks. So how to control ICS in order to improve network training is another challenge issue.

This paper proposed an ICS measure, \ie, the divergence, by using the Earth Mover (EM) distance~\cite{villani2008optimal}, inspired by the success of Wasserstein Generative Adversarial Networks (WGAN)~\cite{arjovsky2017wasserstein}. Furthermore, this paper simplifies the measure by leveraging the Kantorovich-Rubinstein duality~\cite{villani2008optimal}.

Based on this proposed ICS measure, this paper furthermore derives an upper bound of ICS between $p_l^{(t+\Delta t)}$ and $p_l^{(t)}$ at the $t+\Delta t$th and $t$th iterations, respectively. The upper bound has shown that BN techniques can control the bounding of ICS in the low-dimensional case with small noise. Otherwise, the upper bound is out of control by using BN techniques. So it is required to analyze the lower bound of ICS especially for non-trivial distributions. Thus, this paper also derives a lower bound of ICS. The result has shown that the high-order moments and noise have great impact on the lower bound.

In order to control ICS, this paper proposes an algorithm that unitizes the normalized outputs. It is obvious that normalizing the outputs can introduce the moment-dependent upper bound, but such normalization would degrade when the moment estimation is not accurate. In contrast, unitizing the outputs in this proposed algorithm can lead to a constant upper bound without noise. However, by simply unitizing the outputs, the bound is very tight such that the weights cannot be updated in a reasonable range. Instead, this paper introduces a trainable parameter $\alpha$ in the unitization, such that the upper bound is adjustable and ICS can be further controlled by fine-tuning $\alpha$. It is important that the proposed unitization is embedded into the BN framework in order to reduce the information loss. The experiments show that the proposed unitization can outperform existing BN techniques in the benchmark datasets including CIFAR-10, CIFAR-100~\cite{krizhevsky2009learning} and ImageNet~\cite{russakovsky2015imagenet}.

\section{Related work}
\label{sec:related work}
Batch Normalization aims to reduce ICS by stabilizing the distributions of layers' outputs~\cite{ioffe2015batch}. In fact, BN only controls the first two moments by normalizing the outputs, which is inspired by the idea of whitening the outputs to make the training faster~\cite{lecun2012efficient}. However, BN is required to work with a sufficiently large batch size in order to reduce the noise of moments, and will degrade when the restriction on batch sizes is more demanding in some tasks~\cite{girshick2015fast, ren2015faster, he2017mask}. Thus the methods including LN~\cite{ba2016layer}, IN~\cite{dmitry2016instance} and GN~\cite{wu2018group} have been proposed. These variants estimate the moments within each sample, mitigating the impact of micro-batch. Besides, a method called Kalman Normalization (KN) addresses this problem by the merits of Kalman Filtering~\cite{wang2018kalman}.

Other methods inspired by BN have been proposed to improve network training. Weight Normalization decouples the length of the weight from the direction by re-parameterizing the weights, and speeds up convergence of the training\cite{salimans2016weight}. Cho and Lee regard the weight space in a BN layer as a Riemannian manifold and provide a new learning rule following the intrinsic geometry of this manifold\cite{cho2017riemannian}. Cosine Normalization uses cosine similarity and bounds the results of dot product, addressing the problem of the large variance~\cite{luo2017cosine}. Wu \etal propose the algorithm normalizing the layers' inputs with $l_1$ norm to reduce computation and memory~\cite{wu2018l1}. Huang \etal propose Decorrelated Batch Normalization that whitens instead of normalizing the activations~\cite{huang2018decorrelated}.

However, there is no complete analysis in theory for BN. Santurkar \etal attempt to demonstrate that the performance gain of BN is unrelated to the reduction of ICS by experiments~\cite{santurkar2018does}. However, the first experiment only shows that BN might improve network training by multiple ways besides the reduction of ICS. In the second experiment, the difference between gradients is an unsuitable ICS measure since the gradients are sensitive and the accurate estimations require sufficient samples. In addition, the theoretical analysis provided by Kohler \etal~\cite{kohler2018exponential} require strong assumptions without considering the deep layers. The reason why BN works still remains unclear.

\section{Unitization}
\label{sec:unitization}
The EM distance requires weak assumptions, and has been empirically proved to be effective in improving Generative Adversarial Networks (GANs)~\cite{goodfellow2014generative}, which replaces the traditional KL-divergence in making the objective function~\cite{arjovsky2017wasserstein}. According to the EM distance, the ICS measure for the $l$th layer's outputs is defined as
\begin{equation}
\label{eq:emd}
W(p_l^{(t+\Delta t)}, p_l^{(t)}) = \inf_{\gamma \in \prod(p_l^{(t+\Delta t)}, p_l^{(t)})} {\expectation}_{(\bs{x}, \bs{y}) \sim \gamma}\big[||\bs{x}-\bs{y}||\big]
\end{equation}
where $\prod(p_l^{(t+\Delta t)}, p_l^{(t)})$ denotes the set of all joint distributions whose marginals are $p_l^{(t+\Delta t)}$ and $p_l^{(t)}$, respectively~\cite{arjovsky2017wasserstein}. Then, by the Kantorovich-Rubinstein duality~\cite{villani2008optimal}, the EM distance Eq.(\ref{eq:emd}) can be rewritten as
\begin{equation}
\label{eq:ics}
W(p_l^{(t+\Delta t)}, p_l^{(t)}) = \sup_{||f||_L \le 1} {\expectation}_{\bs{x}\sim p_l^{(t+\Delta t)}}[f(\bs{x})] - {\expectation}_{\bs{y} \sim p_l^{(t)}}[f(\bs{y})]
\end{equation}
where the distance is obtained by optimizing $f$ over the 1-Lipschitz function space (see the algorithm that estimates the EM distance in the appendix).

\subsection{The Upper Bound}
For $d$-dimensional outputs of the $l$th layer, denote by $\bs{\mu}^{(t)} = (\mu_1^{(t)}, \mu_2^{(t)}, \ldots, \mu_d^{(t)})$ and $(\bs{\sigma}^{(t)})^2 = ((\sigma_1^{(t)})^2, (\sigma_2^{(t)})^2, \ldots, (\sigma_d^{(t)})^2)$ the mean and variance of the distribution $p_l^{(t)}$, respectively. The upper bound over $W(p_l^{(t+\Delta t)}, p_l^{(t)})$ is formed by the first two moments (see the proofs of all the theorems in the appendix).
\begin{theorem}
	Suppose that $|\mu_i^{(t)}| < \infty, |\mu_i^{(t + \Delta t)}| < \infty, 1 \le i \le d$. Then,
	\begin{equation*}
	\label{eq:bnupperbound}
	\begin{split}
	W(p_l^{(t+\Delta t)}, p_l^{(t)}) \le & \sum_{i=1}^d (\sigma_{i}^{(t+\Delta t)})^2 + \sum_{i=1}^d (\sigma_{i}^{(t)})^2 + \Big(\sum_{i=1}^d (\mu_i^{(t+\Delta t)} - \mu_i^{(t)})^2\Big)^{\frac{1}{2}} + 2
	\end{split}
	\end{equation*}
\end{theorem}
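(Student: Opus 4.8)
The plan is to bound the Wasserstein distance using the dual formulation in Eq.(\ref{eq:ics}) by exhibiting a suitable relationship between the $1$-Lipschitz test function $f$ and simpler quantities whose expectations we can control via the moments. Since $f$ is $1$-Lipschitz, we have $|f(\bs{x}) - f(\bs{0})| \le \|\bs{x}\|$, so without loss of generality (absorbing the constant $f(\bs{0})$, which cancels in the difference of expectations) we may assume $f(\bs{0}) = 0$ and hence $|f(\bs{x})| \le \|\bs{x}\|$. This reduces the problem to bounding $\expectation_{\bs{x} \sim p_l^{(t+\Delta t)}}[\|\bs{x}\|] + \expectation_{\bs{y} \sim p_l^{(t)}}[\|\bs{y}\|]$, since $W = \sup_f \big(\expectation_{p^{(t+\Delta t)}}[f] - \expectation_{p^{(t)}}[f]\big) \le \expectation_{p^{(t+\Delta t)}}[|f|] + \expectation_{p^{(t)}}[|f|]$.

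The main technical step is then to control each term $\expectation[\|\bs{x}\|]$ by the first and second moments. First I would apply Jensen's inequality to pass from $\expectation[\|\bs{x}\|]$ to $(\expectation[\|\bs{x}\|^2])^{1/2}$, and then expand $\expectation[\|\bs{x}\|^2] = \sum_{i=1}^d \expectation[x_i^2] = \sum_{i=1}^d \big((\sigma_i^{(t)})^2 + (\mu_i^{(t)})^2\big)$. The remaining obstacle is reconciling this square-root-of-a-sum expression with the stated bound, which contains the variances \emph{linearly} (not under a square root) and the mean-difference term $\big(\sum_i (\mu_i^{(t+\Delta t)} - \mu_i^{(t)})^2\big)^{1/2}$ rather than the raw means. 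I expect the key manipulation to be a shift of the integration variable: since $f$ is translation-covariant up to the Lipschitz constant, one should center the test function around one of the means and estimate $f(\bs{x})$ relative to a reference point, so that the \emph{difference} of the means appears and the individual means do not. Concretely, I would write $f(\bs{x}) - f(\bs{y})$ and split the displacement into a centered fluctuation plus the mean gap, so the $\big(\sum_i(\mu_i^{(t+\Delta t)} - \mu_i^{(t)})^2\big)^{1/2}$ term arises from the separation of the two means while the variance terms come from the fluctuations of each distribution about its own mean.

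To convert the square-root-of-variance terms into the linear variance terms of the statement, I anticipate using the elementary inequality $\sqrt{a} \le a + 1$ (valid for $a \ge 0$, since $a - \sqrt{a} + \tfrac14 = (\sqrt{a}-\tfrac12)^2 \ge 0$ gives $\sqrt{a} \le a + \tfrac14 \le a+1$), applied to each centered second-moment quantity $\sum_i (\sigma_i)^2$. Applying this bound once to $p_l^{(t+\Delta t)}$ and once to $p_l^{(t)}$ produces the two linear variance sums together with the additive constant $2$ ($=1+1$). The mean-gap term survives as a genuine square root because it is bounded directly without needing such a relaxation. The hardest part of the argument is organizing the decomposition so that the means cancel correctly: a naive bound on $\expectation[\|\bs{x}\|]$ introduces $(\mu_i)^2$ terms that are absent from the statement, so the translation/centering trick that replaces absolute means by their difference is essential and is where the proof must be carried out carefully.
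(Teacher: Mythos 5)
Your proposal is correct, and its essential step coincides with the paper's: after you discard the naive bound of your first paragraph (which, as you yourself note, would leave the raw means $(\mu_i)^2$ in the estimate rather than their difference), your centering trick is exactly the paper's decomposition, namely
\begin{equation*}
W(p_l^{(t+\Delta t)}, p_l^{(t)}) \le \expectation_{\bs{x}}\big[\|\bs{x}-\bs{\mu}^{(t+\Delta t)}\|_2\big] + \expectation_{\bs{y}}\big[\|\bs{y}-\bs{\mu}^{(t)}\|_2\big] + \|\bs{\mu}^{(t+\Delta t)}-\bs{\mu}^{(t)}\|_2,
\end{equation*}
obtained by subtracting $f(\bs{\mu})$ inside each expectation and invoking the Lipschitz property three times. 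The only genuine divergence is in how each centered first absolute moment is converted into a sum of variances plus a constant. The paper does this pointwise: it splits the expectation by the indicator of $\{\|\bs{x}-\bs{\mu}\|_2 \le 1\}$, bounds the norm by $1$ on that event and by its square on the complement, paying $+1$ per distribution. You instead apply Jensen's inequality to get $\big(\sum_i \sigma_i^2\big)^{1/2}$ and then the scalar inequality $\sqrt{a} \le a + \tfrac{1}{4}$, paying only $+\tfrac{1}{4}$ per distribution. Both conversions are elementary and valid; yours is slightly cleaner and in fact proves a marginally sharper result (constant $\tfrac{1}{2}$ in place of $2$, so the stated bound follows a fortiori), while the paper's indicator split avoids Jensen entirely and works with a purely pointwise inequality under the expectation. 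Either way the mean-gap term is handled identically, directly by the Lipschitz bound $|f(\bs{\mu}^{(t+\Delta t)})-f(\bs{\mu}^{(t)})| \le \|\bs{\mu}^{(t+\Delta t)}-\bs{\mu}^{(t)}\|_2$.
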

In BN, the output is normalized by the estimated mean $\hat{\mu_i}$ and standard deviation $\hat{\sigma_i}$. Thus, for the normalized output, assume that $\mu_i^{(t)} = \epsilon_{\mu, i}^{(t)}, (\sigma_i^{(t)})^2 = 1 + \epsilon_{\sigma^2, i}^{(t)}, 1 \le i \le d$, where $\epsilon_{\mu, i}^{(t)}, \epsilon_{\sigma^2, i}^{(t)}, 1 \le i \le d$ are noise. According to the above theorem, the upper bound is
\begin{equation}
\label{eq:bnnoise}
\begin{split}
W(p_l^{(t+\Delta t)}, p_l^{(t)}) \le & \sum_{i=1}^d \epsilon_{\sigma^2, i}^{(t+\Delta t)} + \sum_{i=1}^d \epsilon_{\sigma^2, i}^{(t)} + 2d + \Big(\sum_{i=1}^d (\epsilon_{\mu, i}^{(t+\Delta t)} - \epsilon_{\mu, i}^{(t)})^2\Big)^{\frac{1}{2}} + 2
\end{split}
\end{equation}
It's obvious that normalizing the outputs by noise-free moments will lead to a constant upper bound and impose a constraint on ICS. In contrast, the distance for the unnormalized outputs is unbounded (see an example of the unbounded distance in the appendix). Nevertheless, the noise cannot be controlled in practice, and for high-dimensional outputs, the bound in Eq.(\ref{eq:bnnoise}) might be too loose to constraint the distance due to a large $d$. In this case, the ICS for the non-trivial distributions cannot be bounded effectively by controlling the first two moments as BN techniques have done. Then, the analysis of the lower bound is required.

\subsection{The Lower Bound}
For convenience, let $\bs{x} = (x_1, x_2, \ldots, x_d)$ and $\bs{y} = (y_1, y_2, \ldots, y_d)$. Then, the lower bound on the distance is obtained by constructing a $1$-Lipschitz function.
\begin{theorem}
	Suppose that $C > 0$ is a real number, and $p \ge 2$ is an integer. Then,
	\begin{equation*}
	\label{eq:lowerbound}
	\begin{split}
	W(p_l^{(t + \Delta t)}, p_l^{(t)}) = \sup_{||f||_L \le 1} {\expectation}_{\bs{x} \sim p_l^{(t + \Delta t)}}[f(\bs{y})] - {\expectation}_{\bs{y} \sim p_l^{(t)}}[f(\bs{y})] \ge \big|{\expectation}_{\bs{x} \sim p_l^{(t + \Delta t)}}[f_{p, C}(\bs{x})] - {\expectation}_{\bs{y} \sim p_l^{(t)}}[f_{p, C}(\bs{y})]\big| \\
	\end{split}
	\end{equation*}
	where $f_{p, C}$ is the $1$-Lipschitz function defined as
	\begin{equation*}
	\begin{split}
	f_{p, C}(\bs{x}) = \dfrac{1}{pC^{p-1}d^{\frac{1}{2}}}\bigg(\sum_{|x_i| \le C} x_i^p + \sum_{x_i < -C} (-C)^p + \sum_{x_i > C} C^p\bigg)
	\end{split}
	\end{equation*}
\end{theorem}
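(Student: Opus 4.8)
The plan is to reduce the entire statement to a single verification, namely that $f_{p,C}$ really is $1$-Lipschitz. Once this is granted, the inequality follows immediately from the Kantorovich--Rubinstein duality Eq.(\ref{eq:ics}): since its right-hand side is a supremum over the \emph{whole} $1$-Lipschitz space, evaluating the difference of expectations at the single admissible function $f_{p,C}$ is already a lower bound for $W$. Because $-f_{p,C}$ is $1$-Lipschitz exactly when $f_{p,C}$ is, the same supremum also dominates the negated difference. Taking the larger of the two bounds produces the absolute value on the right-hand side of the theorem, so no further work is needed after the Lipschitz check.

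The substantive step is therefore the Lipschitz verification, which I would organize around a one-dimensional building block. Define
\[
g(x) = \begin{cases} (-C)^p, & x < -C, \\ x^p, & |x| \le C, \\ C^p, & x > C, \end{cases}
\]
so that $f_{p,C}(\bs{x}) = \frac{1}{pC^{p-1}d^{\frac{1}{2}}}\sum_{i=1}^d g(x_i)$. I would first confirm that $g$ is continuous at the junctions $x = \pm C$ (the clamped constants agree with $x^p$ there for both parities of $p$), and that its derivative equals $px^{p-1}$ on $(-C,C)$ and vanishes outside $[-C,C]$. Since $|px^{p-1}| \le pC^{p-1}$ on $(-C,C)$ and the derivative is zero elsewhere, $g$ is piecewise-$C^1$ and continuous with derivative bounded by $pC^{p-1}$ wherever it exists; hence $g$ is $pC^{p-1}$-Lipschitz as a function of one real variable, with the extremal slope attained precisely at the clamping boundaries $x=\pm C$.

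Next I would lift this to $\realnumber^d$. For any $\bs{x}, \bs{y}$, the triangle inequality together with the one-dimensional bound gives
\[
|f_{p,C}(\bs{x}) - f_{p,C}(\bs{y})| \le \frac{1}{pC^{p-1}d^{\frac{1}{2}}} \sum_{i=1}^d |g(x_i) - g(y_i)| \le \frac{1}{d^{\frac{1}{2}}} \sum_{i=1}^d |x_i - y_i|.
\]
A single application of the Cauchy--Schwarz inequality, $\sum_{i=1}^d |x_i - y_i| \le d^{\frac{1}{2}}\,||\bs{x} - \bs{y}||$, cancels the prefactor $d^{-\frac{1}{2}}$ and yields $|f_{p,C}(\bs{x}) - f_{p,C}(\bs{y})| \le ||\bs{x} - \bs{y}||$, which is exactly the $1$-Lipschitz condition; combined with the duality argument of the first paragraph, the theorem follows.

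I expect the main obstacle to be bookkeeping rather than any deep idea: one must check that clamping $x^p$ to the constants $(\pm C)^p$ outside $[-C,C]$ never increases the slope, and that the worst-case slope $pC^{p-1}$ occurs at the boundary, so that the normalizing constant $pC^{p-1}d^{\frac{1}{2}}$ cancels exactly against both the $pC^{p-1}$ from the derivative bound and the $d^{\frac{1}{2}}$ from Cauchy--Schwarz. The whole argument hinges on matching these three constants, which is precisely why the normalization in $f_{p,C}$ is chosen as it is.
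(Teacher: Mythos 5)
Your proposal is correct, and its outer skeleton coincides with the paper's: both reduce the theorem, via the Kantorovich--Rubinstein duality and the symmetry $f \mapsto -f$ (the paper's Fact 1), to a single verification that $f_{p,C}$ is $1$-Lipschitz; both establish a per-coordinate Lipschitz bound with constant $pC^{p-1}$; and both finish with the identical H\"older/Cauchy--Schwarz step $||\bs{v}-\bs{w}||_1 \le d^{\frac{1}{2}}||\bs{v}-\bs{w}||_2$ to cancel the $d^{\frac{1}{2}}$ in the normalization. Where you genuinely diverge is the mechanism for the per-coordinate bound. The paper works purely algebraically: it expands $|f_b(\bs{v})-f_b(\bs{w})|$ into nine cases according to which of the three regions $v_i$ and $w_i$ fall in, factors each difference of powers as $(v_i-w_i)\sum_{q=0}^{p-1}v_i^{p-1-q}w_i^q$, bounds every such geometric factor by $pC^{p-1}$, and then verifies clamping inequalities such as $|v_i-(-C)| \le |v_i-w_i|$ when $|v_i|\le C$ and $w_i<-C$. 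You instead isolate the one-dimensional clamped power $g$, check continuity at $\pm C$, and invoke the mean-value-theorem fact that a continuous, piecewise-$C^1$ function with derivative bounded by $pC^{p-1}$ wherever it exists is $pC^{p-1}$-Lipschitz; the paper's nine cases collapse into the single observation $\sup|g'| \le pC^{p-1}$, with the clamping monotonicity absorbed into $g'=0$ outside $[-C,C]$. Your route is more modular and easier to audit; the paper's is more elementary in that it never appeals to differentiability, only to the factorization of $v^p-w^p$. One point to make explicit if you write yours up in full: when $x_i$ and $y_i$ lie in different pieces of $g$, the Lipschitz estimate requires splitting the interval at the junction points $\pm C$ and using continuity there --- you check exactly that continuity, so this is bookkeeping rather than a gap.
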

To simplify the analysis, assume that the supports of the distributions are subsets of $[-C_0, C_0]^d$ for some $C_0 > 0$. Then, the lower bound is formed by the $p$th-order moments. For $p > 2$, it's straightforward that the high-order moments affect the lower bound, which cannot be controlled by BN especially in the case of the relaxed upper bound. On the other hand, for $p = 2$ and the normalized output, the lower bound is
\begin{equation}
\label{eq:bnlowerbound}
\begin{split}
W(p_l^{(t + \Delta t)}, p_l^{(t)}) \ge \dfrac{1}{2C_0d^{\frac{1}{2}}}\bigg|\sum_{i=1}^d (\epsilon_{\mu, i}^{(t+\Delta t)})^2 + \epsilon_{\sigma^2, i}^{(t+\Delta t)} - (\epsilon_{\mu, i}^{(t)})^2 - \epsilon_{\sigma^2, i}^{(t)} \bigg|
\end{split}
\end{equation}
The lower bound in Eq.(\ref{eq:bnlowerbound}) is dominated by the noise. Thus, BN degrades in such case especially for micro-batches. Some methods have been proposed, \eg, GN, to reduce the noise of the moments rather than eliminating the noise. So the lower bound is still dependent on moments.

Based on such analysis of BN, this paper proposes an algorithm with an adjustable upper bound that is noise-free and moment-independent to further bound the distance.

\subsection{Vanilla Unitization}
The proposed algorithm unitizes layers' outputs, and the vanilla unitization transformation is defined as
\begin{equation}
g(\bs{x}) = \left\{
\begin{array}{ll}
\dfrac{\bs{x}}{||\bs{x}||_2} &, ||\bs{x}||_2 \neq 0 \\
\bs{c} &, ||\bs{x}||_2  = 0
\end{array}
\right.
s\end{equation}
where $\bs{c}$ is a constant unit vector. Similarly, the upper bound for the unitized output is given. In fact, the EM distance for $g(\bs{x})$ is defined as
\begin{equation}
\begin{split}
W(p_U^{(t + \Delta t)}, p_U^{(t)}) =\sup_{||f||_L \le 1} {\expectation}_{\bs{x}\sim p_l^{(t+\Delta t)}}[f(g(\bs{x}))] - {\expectation}_{\bs{y} \sim p_l^{(t)}}[f(g(\bs{y}))]
\end{split}
\end{equation}
where $p_U^{(t)}$ is the distribution of the unitized outputs.
\stepcounter{subtheoremcounter}
\stepcounter{subtheoremcounter}
\stepcounter{subtheoremcounter}
\begin{subtheorem}
	Suppose that for $\bs{x} \sim p_l^{(t)}$, $g(\bs{x}) \sim p_U^{(t)}$. Then,
	\begin{equation*}
	\begin{split}
	W(p_U^{(t + \Delta t)}, p_U^{(t)}) \le 2
	\end{split}
	\end{equation*}
\end{subtheorem}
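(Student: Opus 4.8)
The plan is to exploit the single defining feature of the unitization map $g$: every output it produces is a unit vector, so both $p_U^{(t+\Delta t)}$ and $p_U^{(t)}$ are supported on the unit sphere $S = \{\bs{z} \in \realnumber^d : ||\bs{z}||_2 = 1\}$. Indeed, when $||\bs{x}||_2 \neq 0$ the map returns $\bs{x}/||\bs{x}||_2$, which has norm $1$, and when $||\bs{x}||_2 = 0$ it returns the constant unit vector $\bs{c}$; in either case $||g(\bs{x})||_2 = 1$. Hence the whole problem reduces to bounding the EM distance between two distributions living on a set of small diameter.

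First I would record the elementary geometric fact that $S$ has $\ell_2$-diameter at most $2$: for any $\bs{u}, \bs{v} \in S$ the triangle inequality gives $||\bs{u} - \bs{v}||_2 \le ||\bs{u}||_2 + ||\bs{v}||_2 = 2$. Then I would invoke the primal (coupling) form of the EM distance, Eq.(\ref{eq:emd}), rather than the dual form displayed just above the statement, because it makes the diameter bound immediate. For any joint distribution $\gamma \in \prod(p_U^{(t+\Delta t)}, p_U^{(t)})$, both marginals are concentrated on $S$, so $(\bs{u}, \bs{v}) \sim \gamma$ satisfies $||\bs{u} - \bs{v}||_2 \le 2$ almost surely; taking expectation gives $\expectation_{(\bs{u},\bs{v}) \sim \gamma}[||\bs{u} - \bs{v}||] \le 2$. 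Since this holds for every coupling, the infimum defining $W(p_U^{(t+\Delta t)}, p_U^{(t)})$ is likewise at most $2$. Equivalently, working from the dual form, one can note that any $1$-Lipschitz $f$ has oscillation $\sup_S f - \inf_S f \le \mathrm{diam}(S) = 2$, so $\expectation_{p_U^{(t+\Delta t)}}[f] - \expectation_{p_U^{(t)}}[f] \le \sup_S f - \inf_S f \le 2$, and taking the supremum over $f$ yields the same conclusion.

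There is essentially no hard step here: once the observation that $g$ maps into the unit sphere is in place, the bound is purely a consequence of the sphere's bounded diameter, with no dependence on the moments or the noise terms $\epsilon_{\mu,i}^{(t)}, \epsilon_{\sigma^2,i}^{(t)}$ that governed the BN bounds. The only point requiring any care is to bound the \emph{infimum} over couplings (or the \emph{supremum} over Lipschitz test functions), not merely one particular coupling; but since the pointwise bound $||\bs{u}-\bs{v}||_2 \le 2$ holds uniformly on $S \times S$, the passage to the infimum is automatic. This also explains the qualitative claim advertised in the introduction — the unitization bound is constant and noise-free — since the value $2$ is nothing but the diameter of the target sphere.
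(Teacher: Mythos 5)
Your proposal is correct, but its main argument takes a genuinely different route from the paper. The paper works entirely in the dual formulation: it first invokes its Fact~2 to restrict the supremum to $1$-Lipschitz functions with $f(\bs{0}) = 0$, then uses the anchoring bound $|f(g(\bs{x}))| = |f(g(\bs{x})) - f(\bs{0})| \le ||g(\bs{x}) - \bs{0}||_2 = 1$, so that $f \circ g$ takes values in $[-1, 1]$ pointwise and the difference of expectations is at most $2$. You instead argue in the primal (coupling) form, using the observation that both push-forward distributions are supported on the unit sphere, whose diameter is $2$; your secondary remark (a $1$-Lipschitz function has oscillation at most $2$ on a set of diameter $2$) is the dual shadow of the same diameter argument and is closest to the paper's proof, differing only in that the paper anchors $f$ at the center of the sphere rather than comparing pairs of points on it. One small caveat for the coupling route: the quantity bounded in this subtheorem is introduced in the paper via the dual expression (the display preceding the statement), so strictly speaking you need the Kantorovich--Rubinstein duality to transfer a primal bound to it; since the paper has already invoked this duality to pass from Eq.(\ref{eq:emd}) to Eq.(\ref{eq:ics}), this is harmless, and your dual variant avoids the issue altogether. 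As for what each approach buys: your diameter argument is more elementary and immediately general (any transformation whose range has diameter $D$ yields the bound $D$, with no reference to the origin), while the paper's anchored bound $|f(g(\bs{x}))| \le ||g(\bs{x})||_2$ is the template reused verbatim in the next two subtheorems, where the image of $g(\cdot;\alpha)$ is a ball of radius $1/\alpha$ when $\min_j \alpha_j > 0$ and is unbounded when some $\alpha_j = 0$ --- a situation the anchored pointwise bound handles uniformly via indicator splitting, whereas a pure diameter bound would have to be patched in the unbounded case.
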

For $g$, the upper bound is absolutely a constant independent of all parameters including $d$. Then, ICS for the unitized outputs is exactly bounded in spite of the distribution $p_l^{(t)}$. Hence, by unitizing the outputs, ICS is fully controlled by this constant bound in fact. However, the constant upper bound leads to the other problem. For $t = 0$ and any $\Delta t > 0$, the distribution $p_U^{(\Delta t)}$ is constrained such that the distance between $p_U^{(\Delta t)}$ and $p_U^{(0)}$ is no more than the constant $2$. This might be a severe problem, especially when the network is poorly initialized. Thus, the unitization has to be modified.

\subsection{Modified Unitization}
To alleviate the problem of the very tight bound, define the transformation, which partly unitizes the outputs, as
\begin{equation}
g(\bs{x}; \alpha) = \left\{
\begin{array}{ll}
\bs{c} &, ||\bs{x}||_2 = 0, \alpha = 1\\
\dfrac{\bs{x}}{\alpha ||\bs{x}||_2 + (1 - \alpha) \times 1} &, other\\
\end{array}
\right.
\label{eq:partlyunitized}
\end{equation}
where $\alpha \in [0, 1]$ is a parameter. Analogously, the upper bound w.r.t. $g(\bs{x}; \alpha)$ is given.
\begin{subtheorem}
	Suppose that for $\alpha \in [0, 1]$ and $\bs{x} \sim p_l^{(t)}$, $g(\bs{x}; \alpha) \sim p_U^{(t)}$. Then,
	\begin{equation*}
	\begin{split}
	W(p_U^{(t + \Delta t)}, p_U^{(t)})
	\le {\indicator}_{\alpha = 0}(\alpha) \cdot ({\expectation}_{\bs{x} \sim p_l^{(t+\Delta)}}[||\bs{x}||_2]
	+ {\expectation}_{\bs{y} \sim p_l^{(t)}}[||\bs{y}||_2]) + {\indicator}_{\alpha > 0}(\alpha) \cdot \dfrac{2}{\alpha}
	\end{split}
	\end{equation*}
\end{subtheorem}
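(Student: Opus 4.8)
The plan is to bound the supremum in the dual (Kantorovich--Rubinstein) form of the EM distance directly, exploiting that the only property of each admissible $f$ that I need is $1$-Lipschitzness. First I would observe that replacing $f$ by $f - f(\bs{0})$ leaves both the Lipschitz constant and the objective ${\expectation}_{\bs{x}}[f(g(\bs{x};\alpha))] - {\expectation}_{\bs{y}}[f(g(\bs{y};\alpha))]$ unchanged (the additive constant cancels in the difference of expectations), so without loss of generality $f(\bs{0}) = 0$. The $1$-Lipschitz property then yields the pointwise estimate $|f(g(\bs{z};\alpha))| = |f(g(\bs{z};\alpha)) - f(\bs{0})| \le \|g(\bs{z};\alpha)\|_2$, which reduces the entire problem to controlling the norm of the transformed point.

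Next I would split on the value of $\alpha$, matching the two indicator terms. For $\alpha = 0$ the denominator in Eq.(\ref{eq:partlyunitized}) collapses to $\alpha\|\bs{x}\|_2 + (1-\alpha) = 1$, so $g(\bs{x};0) = \bs{x}$ is the identity; combining $|f(g(\bs{x};0))| \le \|\bs{x}\|_2$ with the trivial bound ${\expectation}_{\bs{x}}[f] - {\expectation}_{\bs{y}}[f] \le {\expectation}_{\bs{x}}[|f|] + {\expectation}_{\bs{y}}[|f|]$ gives ${\expectation}_{\bs{x} \sim p_l^{(t+\Delta t)}}[\|\bs{x}\|_2] + {\expectation}_{\bs{y} \sim p_l^{(t)}}[\|\bs{y}\|_2]$ as an $f$-independent upper bound, and taking the supremum over $f$ closes this case.

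For $\alpha > 0$ the crux is a uniform norm bound. Writing $r = \|\bs{x}\|_2 \ge 0$, I would show $\|g(\bs{x};\alpha)\|_2 = r/(\alpha r + (1-\alpha)) \le 1/\alpha$. The cleanest route is to note that $\varphi(r) = r/(\alpha r + 1 - \alpha)$ has derivative $(1-\alpha)/(\alpha r + 1 - \alpha)^2 \ge 0$, so $\varphi$ is non-decreasing on $[0,\infty)$ with $\lim_{r\to\infty}\varphi(r) = 1/\alpha$; hence $\varphi(r) \le 1/\alpha$ for every $r$. With $|f(g(\bs{x};\alpha))| \le 1/\alpha$ holding pointwise, the objective is at most $1/\alpha + 1/\alpha = 2/\alpha$, and the supremum over $f$ inherits this bound. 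Assembling the two cases through $\indicator_{\alpha=0}$ and $\indicator_{\alpha>0}$ then gives the stated inequality.

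I expect the main obstacle to be the bookkeeping at the boundary $\alpha = 1$, where $g$ is defined piecewise: the denominator $\alpha r + (1-\alpha)$ vanishes exactly when $r = 0$, which is precisely the branch routed to the constant $\bs{c}$. I would verify that $\|\bs{c}\|_2 = 1 = 1/\alpha$ there, so the uniform bound $\|g(\cdot;\alpha)\|_2 \le 1/\alpha$ survives, and that for $0 < \alpha < 1$ the denominator stays $\ge 1 - \alpha > 0$, so no division by zero occurs and $g$ is well defined on all of $\realnumber^d$. Everything else is the routine monotonicity computation above; note also that at $\alpha = 1$ the bound reduces to the constant $2$, recovering the vanilla unitization estimate as a consistency check.
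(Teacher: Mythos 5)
Your proposal is correct and follows essentially the same route as the paper's proof: reduce to $1$-Lipschitz functions with $f(\bs{0})=0$ (the paper's Fact 2), bound $|f(g(\bs{x};\alpha))| \le \|g(\bs{x};\alpha)\|_2$ pointwise, and then control that norm by cases on $\alpha$, including the $\|\bs{x}\|_2=0$, $\alpha=1$ branch where $\|\bs{c}\|_2=1\le 1/\alpha$. The only cosmetic difference is how you establish $r/(\alpha r+1-\alpha)\le 1/\alpha$ (monotonicity of $\varphi$ versus the paper's rewriting as $1/(\alpha+(1-\alpha)/r)\ge$ denominator bound), which is an equivalent one-line observation.
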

Note that $\alpha = 0$ implies $g(\bs{x}; \alpha)$ is an identity mapping, and $\alpha > 0$ implies the distance is exactly bounded by $2 / \alpha$. Thus, the bound is dominated by $\alpha$, and the very bound is obtained by fine-tuning $\alpha$ over $[0, 1]$.

Furthermore, considering a set of parameters $\bs{\alpha} = (\alpha_1, \alpha_2, \ldots, \alpha_d) \in [0, 1]^d$, the general unitization is defined as
\begin{equation}
g(\bs{x}; \bs{\alpha}) =
\left\{
\begin{array}{ll}
\bs{0} &, ||\bs{x}||_2 = 0 \\
\big((||\bs{x}||_2 - 1) \cdot \text{diag}(\bs{\alpha}) + E\big)^{-1} \bs{x} &, ||\bs{x}||_2 > 0
\end{array}
\right.
\label{eq:multivariateunitized}
\end{equation}
where $\text{diag}(\bs{\alpha})$ is a diagonal matrix of $\bs{\alpha}$ and $E$ is a unit matrix. Likewise, the upper bound for $g(\bs{x}; \bs{\alpha})$ is given.
\begin{subtheorem}
	Suppose that for $\bs{\alpha} = (\alpha_1, \alpha_2, \ldots, \alpha_d)$, where $\alpha_i \in [0, 1], 1 \le i \le d$, and $\bs{x} \sim p_l^{(t)}$, $g(\bs{x}; \bs{\alpha}) \sim p_U^{(t)}$. Then,
	\begin{equation*}
	\begin{split}
	W(p_U^{(t + \Delta t)}, p_U^{(t)}) \le  {\indicator}_{\min_j \alpha_j > 0}(\bs{\alpha}) \cdot \dfrac{2}{\min_j \alpha_j} 
	+ {\indicator}_{\min_j \alpha_j = 0}(\bs{\alpha}) \cdot ({\expectation}_{\bs{x} \sim p_l^{(t+\Delta)}}[||\bs{x}||_2] 
	+ {\expectation}_{\bs{y} \sim p_l^{(t)}}[||\bs{y}||_2] + 2)
	\end{split}
	\end{equation*}
\end{subtheorem}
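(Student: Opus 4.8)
The plan is to reduce the EM distance to a pointwise bound on the norm of the unitized output $g(\bs{x}; \bs{\alpha})$ and then feed that into the coupling (infimum) form of the distance in Eq.~(\ref{eq:emd}). First I would simplify the transformation. Since $\text{diag}(\bs{\alpha})$ and $E$ are diagonal, the matrix $(\|\bs{x}\|_2 - 1)\,\text{diag}(\bs{\alpha}) + E$ is diagonal with entries $(r-1)\alpha_i + 1$, where I write $r = \|\bs{x}\|_2$; hence for $r > 0$ the $i$th coordinate is simply $g_i(\bs{x}; \bs{\alpha}) = x_i / ((r-1)\alpha_i + 1)$. Verifying that each denominator is nonzero under the hypothesis $\alpha_i \in [0,1]$ simultaneously establishes invertibility and sets up the norm estimates.

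The heart of the argument is a uniform bound on $\|g(\bs{x}; \bs{\alpha})\|_2$, obtained by splitting into the regimes $r \ge 1$ and $0 < r < 1$. For $r \ge 1$ every denominator satisfies $(r-1)\alpha_i + 1 \ge 1$, so $\|g\|_2 \le r$; for $0 < r < 1$ the constraint $\alpha_i \le 1$ forces $(r-1)\alpha_i + 1 = 1 - (1-r)\alpha_i \ge r > 0$, so $\|g\|_2 \le 1$. Together these give $\|g(\bs{x}; \bs{\alpha})\|_2 \le \max(r, 1) \le \|\bs{x}\|_2 + 1$ for every $\bs{\alpha} \in [0,1]^d$. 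When $\min_j \alpha_j > 0$ I would sharpen the $r \ge 1$ case: monotonicity of $(r-1)\alpha_i + 1$ in $\alpha_i$ yields $(r-1)\alpha_i + 1 \ge (r-1)\min_j\alpha_j + 1$, hence $\|g\|_2 \le r / ((r-1)\min_j\alpha_j + 1) \le 1/\min_j\alpha_j$, while the $r < 1$ bound $\|g\|_2 \le 1 \le 1/\min_j\alpha_j$ is already adequate.

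With these range bounds in hand, I conclude through the coupling definition of $W$. If $\min_j \alpha_j > 0$, both $p_U^{(t+\Delta t)}$ and $p_U^{(t)}$ are supported in the ball of radius $1/\min_j\alpha_j$, so for any coupling $\|\bs{u} - \bs{v}\| \le 2/\min_j\alpha_j$ by the triangle inequality, whence $W \le 2/\min_j\alpha_j$. If instead $\min_j\alpha_j = 0$, I would use $\|\bs{u} - \bs{v}\| \le \|\bs{u}\| + \|\bs{v}\|$ together with the pushforward identity $\expectation_{\bs{u} \sim p_U}[\|\bs{u}\|] = \expectation_{\bs{x} \sim p_l}[\|g(\bs{x})\|] \le \expectation[\|\bs{x}\|] + 1$, summing the two marginal contributions to obtain $\expectation_{\bs{x}}[\|\bs{x}\|_2] + \expectation_{\bs{y}}[\|\bs{y}\|_2] + 2$. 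Selecting the relevant term with the indicators $\indicator_{\min_j \alpha_j > 0}$ and $\indicator_{\min_j \alpha_j = 0}$ then produces the stated bound.

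I expect the main obstacle to be the $0 < r < 1$ regime: there the denominators can drop below $1$, so both invertibility and norm control hinge on the hypothesis $\alpha_i \le 1$, which is precisely what guarantees $(r-1)\alpha_i + 1 \ge r$. Everything else — the diagonalization, the $r \ge 1$ estimate, and the passage from a range bound to an EM-distance bound — is routine once this inequality is pinned down.
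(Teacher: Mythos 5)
Your proposal is correct, and it rests on exactly the same key estimate as the paper's proof: the pointwise range bound $\big|\big|g(\bs{x};\bs{\alpha})\big|\big|_2 \le {\indicator}_{\min_j \alpha_j > 0}(\bs{\alpha}) \cdot \frac{1}{\min_j \alpha_j} + {\indicator}_{\min_j \alpha_j = 0}(\bs{\alpha}) \cdot \max\{||\bs{x}||_2, 1\}$, obtained as you do by noting the transformation is diagonal with entries $(r-1)\alpha_i + 1$ and splitting into $r \ge 1$ (each denominator at least $(r-1)\min_j\alpha_j + 1$, giving $||g||_2 \le 1/\min_j\alpha_j$) and $0 < r < 1$ (each denominator at least $r$, giving $||g||_2 \le 1$). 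Where you genuinely diverge is in how that range bound becomes a bound on $W$. The paper works in the Kantorovich--Rubinstein dual form of Eq.~(3): it first restricts, via its Fact~2, to $1$-Lipschitz test functions with $f(\bs{0})=0$, deduces $|f(g(\bs{x};\bs{\alpha}))| \le ||g(\bs{x};\bs{\alpha})||_2$, and bounds the supremum of the difference of expectations term by term. You instead work in the primal coupling form of Eq.~(2): when $\min_j \alpha_j > 0$ both pushforward measures are supported in the ball of radius $1/\min_j\alpha_j$, so every coupling has expected cost at most $2/\min_j\alpha_j$; when $\min_j \alpha_j = 0$ you use $||\bs{u}-\bs{v}|| \le ||\bs{u}|| + ||\bs{v}||$ and the fact that the marginal expectations do not depend on the coupling, together with the pushforward identity and $\max\{||\bs{x}||_2,1\} \le ||\bs{x}||_2 + 1$ (the same relaxation the paper makes inside the expectation). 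Your route is more elementary in that it never invokes the duality theorem at all --- only the triangle inequality and the existence of some coupling --- and it makes the geometric content (a support bound on the unitized distributions) more transparent; the paper's route has the advantage of uniformity, since its other unitization bounds (for $g(\bs{x})$ and $g(\bs{x};\alpha)$) are proved with the identical Fact-2 machinery, and of staying entirely within the dual representation it uses elsewhere for estimation. Both arguments yield the stated bound exactly.
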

The minimum $\alpha_* = \min_j \alpha_j$ dominates the upper bound. If $\alpha_* = 0$, then there exists $i$ such that the scale of $x_i$ remains unchanged after the unitization, and the EM distance for the marginal distribution of $x_i$ is unbounded. In contrast, the constant bound $2 / \alpha^*$ is obtained by $\alpha_* > 0$. Furthermore, if $\alpha_i$ is fixed for some $i$, the other parameters $\alpha_j, j \neq i$ can be freely fine-tuned over $[\alpha_i, 1]$ without changing the bound. Thus, $g(\bs{x}; \bs{\alpha})$ is more flexible, and used in the proposed algorithm. However, the unitized outputs lose some information, \eg, similarity between the samples, which cannot be recovered by an affine transformation like that in BN. The smaller bound leads to more information loss, and a trade-off is required. Then, the unitization algorithm is given.

\subsection{Algorithm}
For a network, $\bs{\alpha}$ in each unitization layer is trained with the weights to reduce the objective function. However, since $\bs{\alpha} \in [0, 1]^d$, the training would lead to a constrained optimization problem. To avoid the problem and make the training stable, this paper uses a simple interpolation method for Eq.(\ref{eq:multivariateunitized}). The practical transformation is defined as
\begin{equation}
\label{eq:implementedunitization}
g(\bs{x}; \bs{\alpha}) = \bigg[\frac{1}{\sqrt{||\bs{x}||_2^2+\epsilon}} \bs{\alpha} + (\bs{1} - \bs{\alpha})\bigg] \odot \bs{x}
\end{equation}
where $\epsilon > 0$ makes the non-zero denominator, and $\odot$ represents element-wise production. In the proposed algorithm, the unitization Eq.(\ref{eq:implementedunitization}) is embedded into the framework of BN to reduce the information loss. In fact, only the unitization might require large $\bs{\alpha}$ to bound the EM distance with more information loss. In contrast, there is less information loss in BN since  similarity between the normalized outputs remains unchanged and the affine transformation in BN can recover some information. Thus, the proposed algorithm integrates these two techniques to bound ICS with reasonable information loss. The algorithm is presented in Algorithm~\ref{alg:unitization}, where element-wise division is also denoted by $/$. The moments $\bs{\mu}$ and $\bs{\sigma}^2$ in inference is computed in the same way~\cite{ioffe2015batch}.

\begin{algorithm}[htb]
	\textbf{Input}: dataset $\{\bs{x}_{i}\}_{i=1}^{n}$, trainable parameters $\bs{\alpha}, \bs{\gamma}$ and $\bs{\beta}$
	
	\textbf{Output}: unitized results $\{\bs{y}_{i}\}_{i=1}^{n}$
	
	\begin{algorithmic}[1]
		\STATE{$\bs{\mu} \gets \frac{1}{n}\sum_i \bs{x}_i$}
		\STATE{$\bs{\sigma}^2 \gets \frac{1}{n}\sum_i (\bs{x}_i-\bs{\mu})^2$}
		\FOR{$i \gets 1$ to $n$}
		\STATE{$\hat{\bs{x}}_i \gets (\bs{x}_i-\bs{\mu})/\sqrt{\bs{\sigma}^2 + \epsilon}$}
		\STATE{$p \gets 1 / \sqrt{||\hat{\bs{x}}_i||_2^2 + \epsilon} $}
		\STATE{$\overline{\bs{x}}_i \gets [p \bs{\alpha} + (\bs{1}-\bs{\alpha}) ]\odot \hat{\bs{x}}_i$}
		\STATE{$\bs{y}_{i} \gets \bs{\gamma} \odot \overline{\bs{x}}_i + \bs{\beta}$}
		\ENDFOR
	\end{algorithmic}
	\caption{Unitization Algorithm}
	\label{alg:unitization}
\end{algorithm}

\begin{figure*}[t!]
	\centering
	\begin{subfigure}[t]{0.25\textwidth}
		\centering
		\includegraphics[width=40mm,scale=0.5]{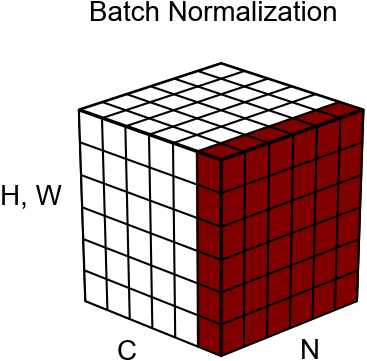}
		\caption{}
		\label{fig:unitizedcnns:a}
	\end{subfigure}
	\begin{subfigure}[t]{0.25\textwidth}
		\centering
		\includegraphics[width=40mm,scale=0.5]{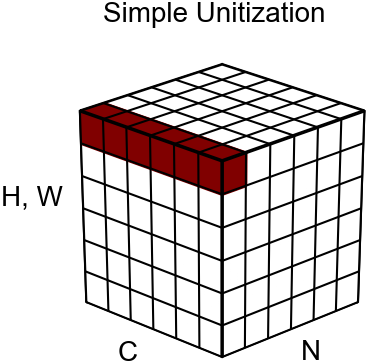}
		\caption{}
		\label{fig:unitizedcnns:b}
	\end{subfigure}
	\begin{subfigure}[t]{0.25\textwidth}
		\centering
		\includegraphics[width=40mm,scale=0.5]{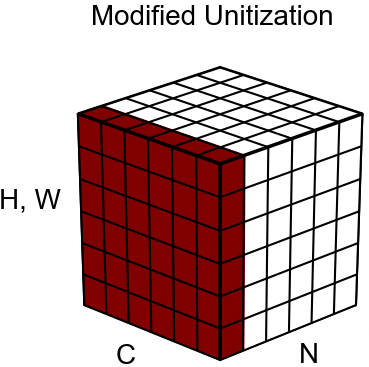}
		\caption{}
		\label{fig:unitizedcnns:c}
	\end{subfigure}
	\caption{Different methods for estimating the statistics. Like the visualization of normalization methods~\cite{wu2018group}, each subplot shows a feature map tensor, with $N$, $C$ and $(H, W)$ as the batch axis, the channel axis and the spatial axes, respectively. (a) shows the values of the pixels in red are used to compute the moments $\bs{\mu}$ and $\bs{\sigma}$ in BN, while (b) and (c) show the values of the pixels are used to obtain the norm. The estimated moments and norms are shared within these pixels.}
	\label{fig:unitizedcnns}
\end{figure*}

\subsection{Unitized Convolutional Layers}

\begin{algorithm}[htb]
	\textbf{Input:} dataset $\mathcal{D} = \big\{x_{ij}^{(k)}|1\le k \le N, 1\le i \le C, 1 \le j$ $\le HW \big\}$, trainable parameters $\bs{\alpha}, \bs{\gamma}$ and $\bs{\beta}$
	
	\textbf{Output:} unitized results $\big\{y_{ij}^{(k)}|1\le k \le N, 1\le i \le C, 1 \le j \le HW \big\}$
	
	\begin{algorithmic}[1]
		\FOR{$k \gets 1$ to $N$}
		\STATE{$s \gets 0$}
		\FOR{$i \gets 1$ to $C$}
		\FOR{$j \gets 1$ to $HW$}
		\STATE{$\hat{x}_{ij}^{(k)} = \text{BN}(x_{ij}^{(k)};\mathcal{D})$}
		\STATE{$s \gets s + \hat{x}_{ij}^{(k)2}$}
		\ENDFOR
		\ENDFOR
		\STATE{$s \gets s / (nHW)$}
		\STATE{$p \gets 1/\sqrt{s + \epsilon}$}
		
		\FOR{$i \gets 1$ to $C$}
		\FOR{$j \gets 1$ to $HW$}
		\STATE{$\bar{x}_{ij}^{(k)} \gets [p\alpha_i + (1-\alpha_i)]\hat{x}_{ij}^{(k)}$}
		\STATE{$y_{ij}^{(k)} \gets \gamma_i \bar{x}_{ij}^{(k)} + \beta_i$}
		\ENDFOR
		\ENDFOR
		\ENDFOR
	\end{algorithmic}
	\caption{Unitization Algorithm for Convolutional Layers}
	\label{alg:unitizedcnn}
\end{algorithm}

To take into account the spatial context of image data, this paper also proposes the unitized convolutional layers. As recommended by~\cite{ioffe2015batch}, the moments $\bs{\mu}$ and $\bs{\sigma}^2$ in Algorithm~\ref{alg:unitization} are computed over the whole mini-batch at different locations w.r.t. a feature map, and they are shared within the same feature map (Figure~\ref{fig:unitizedcnns}(a)). But the norm in the unitization is computed in a different way. How to compute the norm is determined by the definition of a single sample for image data. A simple algorithm follows the idea of BN in convolutional layers, where the pixels at the same location in all channels is regarded as a single sample (Figure~\ref{fig:unitizedcnns}(b)), and then this sample will be unitized by its norm. However, this algorithm scales the pixels by location-dependent norms, ignoring the spatial context.

Thus, the computation of the norms has to be modified. Instead, the pixels at all locations in all feature maps within an image form a single sample (Figure~\ref{fig:unitizedcnns}(c)). The sample's norm will be location-independent and shared within these pixels. However, this will lead to a very large norm for a pixel. As the inverse of the norm, $p$ in Algorithm~\ref{alg:unitization} will be relatively small and make $p \bs{\alpha}$ be ignored when $\bs{\alpha}$ is being fine-tuned. Then $\bs{\alpha}$ only scales $\hat{\bs{x}}_i$ by $1 - \bs{\alpha}$, but the scale has been controlled by $\bs{\gamma}$. Hence, the norm is divided by a constant related to the number of the pixels before the unitization.

The modified unitization is presented in Algorithm~\ref{alg:unitizedcnn}, where $x_{ij}^{(k)}$ denotes the value at the $j$th location in the $i$th feature map, generated from the $k$th training sample; $\hat{x}_{ij}^{(k)}$, $\bar{x}_{ij}^{(k)}$ and $y_{ij}^{(k)}$ are defined in the same way; $\text{BN}(\cdot;\mathcal{D})$ denotes the normalization transformation for convolutional layers~\cite{ioffe2015batch} using the dataset $\mathcal{D}$ without the affine transformation; $\alpha_i$, $\gamma_i$ and $\beta_i$ denote the $i$th element of $\bs{\alpha}$, $\bs{\gamma}$ and $\bs{\beta}$, respectively; the $n$ in Line 9 is a hyper-parameter that is set to $HW$ by default.

\begin{figure*}[t!]
	\centering
	\begin{subfigure}[t]{\dimexpr0.20\textwidth+15pt\relax}
		\makebox[15pt]{\raisebox{30pt}{\rotatebox[origin=c]{90}{BN}}}%
		\includegraphics[width=\dimexpr\linewidth-15pt\relax]
		{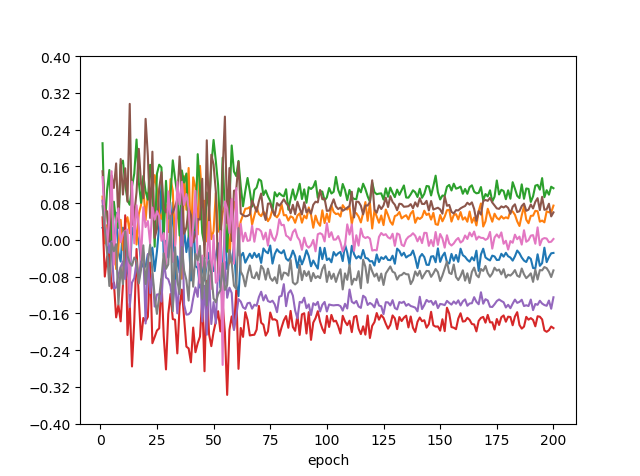}
		\makebox[15pt]{\raisebox{30pt}{\rotatebox[origin=c]{90}{Unitization}}}%
		\includegraphics[width=\dimexpr\linewidth-15pt\relax]
		{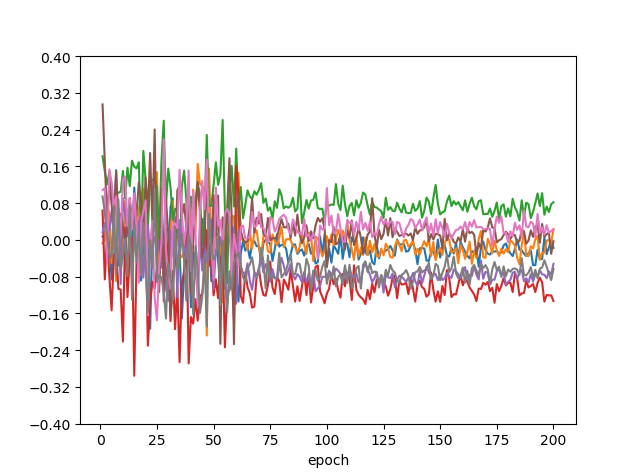}
		\caption{Mean}
	\end{subfigure}\hfill
	\begin{subfigure}[t]{0.20\textwidth}
		\includegraphics[width=\textwidth]
		{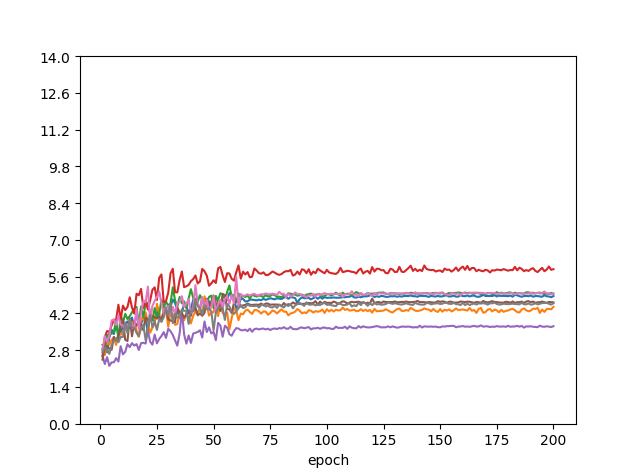}
		\includegraphics[width=\textwidth]
		{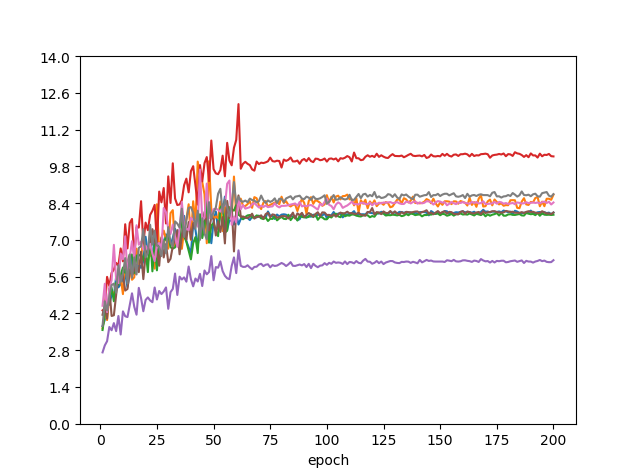}
		\caption{Variance}
	\end{subfigure}\hfill
	\begin{subfigure}[t]{0.20\textwidth}
		\includegraphics[width=\textwidth]
		{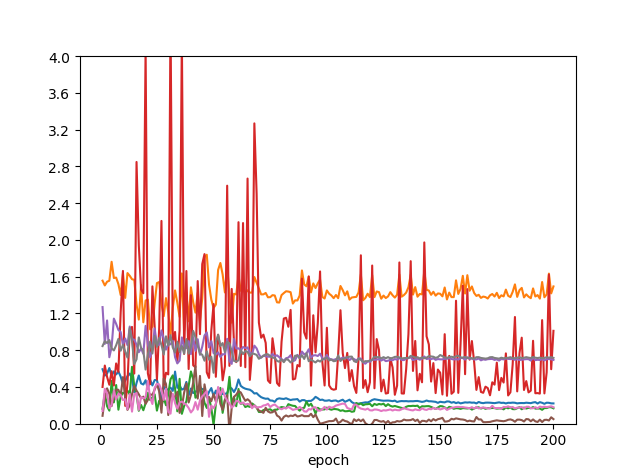}
		\includegraphics[width=\textwidth]
		{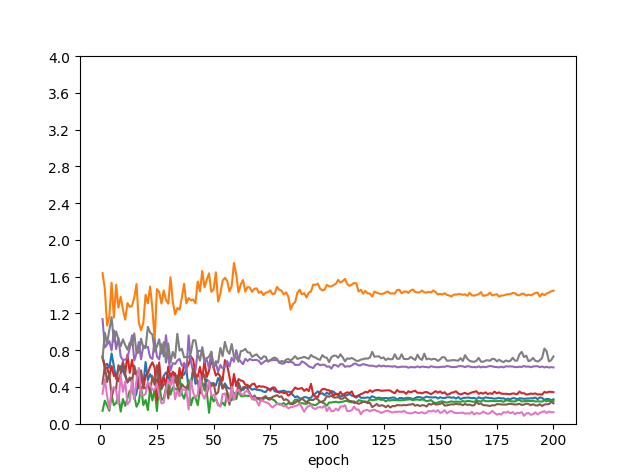}
		\caption{Skewness}
	\end{subfigure}\hfill
	\begin{subfigure}[t]{0.20\textwidth}
		\includegraphics[width=\textwidth]
		{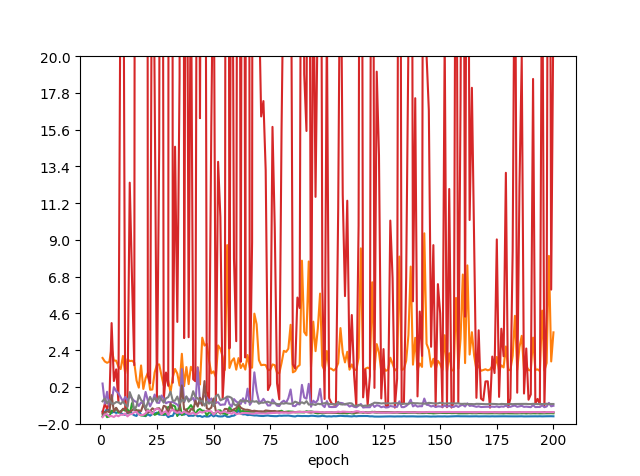}
		\includegraphics[width=\textwidth]
		{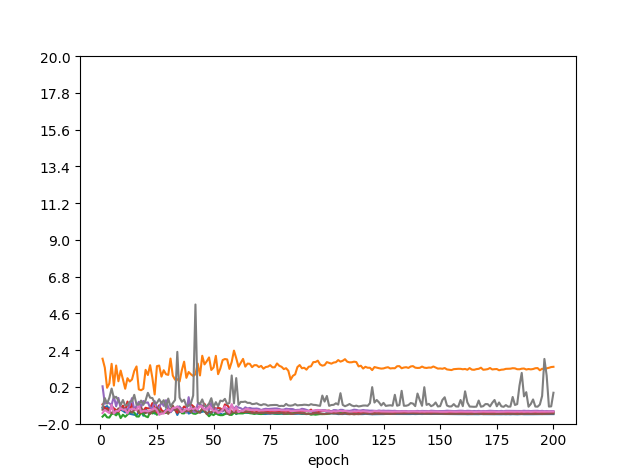}
		\caption{Kurtosis}
	\end{subfigure}
	\caption{Moments estimated over the 8-unit layers' outputs. Each line in a subfigure represents the moment w.r.t. a unit's output. In general, more stable moments are obtained by using the unitization.}
	\label{fig:moments}
\end{figure*}

\section{Experiments}
\label{sec:experiments}

\subsection{Estimated Moments}
To verify the ability of the unitization controlling higher moments, we train simple neural networks on the MNIST dataset~\cite{lecun1998gradient} and estimate the moments of a certain layer's outputs.

\tbf{Network Architectures}: The inputs of the networks are $28 \times 28$ images, followed by a stack of fully-connected layers with ReLU activations, consisting of 10 100-unit layers and one 8-unit layer. A BN/unitization layer follows each fully-connected layer. The networks end with a fully-connected layer for 10 classes.

\tbf{Implementation Details}: The networks are trained using Mini-batch Gradient Descent for 200 epochs, and the batch size is 128. The learning rate starts from 0.05 and is divided by 5 at the $61$th, $121$th and $161$th epochs. At the end of each epoch, the training samples are fed into the networks to get the normalized/unitized 8-unit layer's outputs. Then, we estimate the mean, variance, skewness and kurtosis over the outputs.

\tbf{Results}: As is shown in Figure~\ref{fig:moments}, the estimated mean and variance for both BN and the unitization layers are stable through the training. However, the estimated skewness and kurtosis are unstable w.r.t. the BN layer, with large fluctuation in the red line. By the lower bound in Eq.(\ref{eq:lowerbound}), the EM distance will be large. In contrast, there are more stable skewness and kurtosis of the unitized outputs. The proposed unitization controls the high-order moments

\subsection{Classification Results on CIFAR}
For the tasks of image recognition, we run the experiments on CIFAR-10 and CIFAR-100 datasets~\cite{krizhevsky2009learning}, following the data augmentation recommended by~\cite{lee2015deeply} (the experiments of comparing the unitization with GN~\cite{wu2018group} are provided in the appendix). Code is available at \url{https://github.com/unknown9567/unitization.git}.

\tbf{Network Architectures}: The ResNet-20, ResNet-110 and ResNet-164~\cite{he2016deep, he2016identity} with BN or the unitization are trained to compare the performance. The ResNets follow the general architecture~\cite{he2016identity} with full pre-activation blocks.

\tbf{Implementation Details}: In each experiment, for BN and the unitization, the networks are initialized by the same weights that are generated using the method of~\cite{he2015delving} to reduce impacts of initialization. Each network is trained by Mini-batch Gradient Descent with Nesterov's momentum, and the same learning rate in the moment experiments is used. The momentum is $0.9$ and the weight decay is $0.0005$. The mini-batch sizes are 128 and 64 in training $\{$ResNet-20, ResNet-110$\}$ and ResNet-164, respectively. Each network is evaluated after 200 epochs and the median accuracy of 5 runs is reported. The parameter $\bs{\alpha}$ is initialized to $\bs{0}$. The parameters $\bs{\gamma}$ and $\bs{\beta}$ are initialized to $\bs{1}$ and $\bs{0}$, respectively, as suggested by~\cite{ioffe2015batch}.

\begin{table}[h]
	\centering
	\captionsetup{labelsep=newline,justification=centering}
	\caption{Classification accuracy on the CIFAR-10 testing dataset.}
	\label{tb:cifar10}
	\begin{tabular}{lcl}
		\toprule
		Network                &    Mini-batch size        &     Accuracy\\
		\midrule
		ResNet-20 (BN)               &    128               &  91.79$\%$ \\
		ResNet-20 (Unitization)     &    128               & \textbf{92.21$\%$} \\
		\midrule
		ResNet-110 (BN)~\cite{he2016identity}               &    128               &  93.63$\%$ \\
		ResNet-110 (BN)               &    128               &  93.99$\%$ \\
		ResNet-110 (Unitization)     &    128               & \textbf{94.12$\%$} \\
		\midrule
		ResNet-164 (BN)~\cite{he2016identity}               &    128               &  94.54$\%$ \\
		ResNet-164 (BN)               &    64               &  94.34$\%$ \\
		ResNet-164 (Unitization)     &     64               & \textbf{94.62$\%$} \\
		\bottomrule
	\end{tabular}
\end{table}
\tbf{Results}: Table~\ref{tb:cifar10} shows the results on the CIFAR-10 dataset, where the results in~\cite{he2016identity} are also presented for comparison. The proposed algorithm has better performance, compared with BN, which raises the classification accuracy for each ResNet. But there is less improvement in the accuracy of the deeper network, which might be explained by the less benefit from stacking more layers in deeper networks. Actually, the ResNet-1001~\cite{he2016identity} only achieves an accuracy of $95.08\%$ that is the limit of these ResNets with BN. The accuracy of ResNet-164 in~\cite{he2016identity} is only $0.54\%$ less than that of the ResNet-1001, but using the unitization raises the accuracy by $0.08\%$.

The results on the CIFAR-100 dataset are reported in Table~\ref{tb:cifar100}, where the unitization still outperforms BN, with increase of over $1\%$ in the accuracy of each experiment.

\begin{table}[h]
	\centering
	\captionsetup{labelsep=newline,justification=centering}
	\caption{Classification accuracy on the CIFAR-100 testing dataset.}
	\label{tb:cifar100}
	\begin{tabular}{lcl}
		\toprule
		Network                &    Mini-batch size        &     Accuracy\\
		\midrule
		ResNet-20 (BN)               &    128               &  66.43$\%$ \\
		ResNet-20 (Unitization)    &    128               & \textbf{67.49$\%$} \\
		\midrule
		ResNet-110 (BN)              &    128               &  72.27$\%$ \\
		ResNet-110 (Unitization)    &    128               & \textbf{73.31$\%$} \\
		\midrule
		ResNet-164 (BN)~\cite{he2016identity}               &    128               &  75.67$\%$ \\
		ResNet-164 (BN)              &    64               &  76.56$\%$ \\
		ResNet-164 (Unitization)    &     64               & \textbf{77.58$\%$} \\
		\bottomrule
	\end{tabular}
\end{table}

\subsection{Classification Results on ImageNet}
We further evaluate the unitization on the ImageNet 2012 classification dataset~\cite{russakovsky2015imagenet}. The networks are trained on the 1.28M training images, and evaluated on the 50k validation images.  Only the scale augmentation~\cite{he2016deep, simonyan2014very} is used.

\tbf{Network Architectures}: Only the ResNet-101 with BN or the unitization is trained to compare the performance. The network follows the architecture~\cite{he2016deep} but with full pre-activation blocks~\cite{he2016identity}. Besides, the outputs of each shortcut connection and the final block are not normalized or unitized.

\tbf{Implementation Details}: Like the experiments on CIFAR datasets, the weights are initialized by the method~\cite{he2015delving}, shared between the experiments and trained by the gradient descent with the same momentum, but the weight decay is 0.0001. The learning rate starts from 0.01 and is divided by 5 at the $31$th, $61$th and $91$th epochs. The batch size is 64 for a single GPU. After 120 epochs, the networks are evaluated on the validation data by two methods. The first method resizes the images with the sorter side in $\{224, 256, 384, 480, 640\}$ and averages the scores over 42 crops at all scales (2 central crops for 224-scale images, 10 standard crops for other resized images). The second method adopts the fully-convolutional form and averages the scores over the same multi-scale images~\cite{he2016deep}. Besides, the $n$ in Line 9 of Algorithm~\ref{alg:unitizedcnn} is fixed and set to the same value in the training for the multi-scale images.

\begin{table}[h]
	\centering
	\captionsetup{labelsep=newline,justification=centering}
	\caption{Classification accuracy on the ImageNet dataset.}
	\label{tb:imagenet}
	\begin{tabular}{lccc}
		\toprule
		Algorithm                &    Method/Mini-batch size             &     Top-1           & Top-5 \\
		\midrule
		BN                   &    multi-scale crops/64  &  78.12$\%$          & 93.45$\%$\\
		Unitization              &    multi-scale crops/64  & \textbf{78.33$\%$}  & 93.22$\%$\\
		\midrule
		BN                   &    fully-convolution/64  &  76.47$\%$          & 93.02$\%$\\
		Unitization              &    fully-convolution/64  & \textbf{77.84$\%$}  & \textbf{93.33$\%$}\\
		\midrule
		BN~\cite{he2016deep} &    fully-convolution/256 &  80.13$\%$          & 95.40$\%$\\
		\midrule
		\bottomrule
	\end{tabular}
\end{table}
\tbf{Results}: In the results, the unitization outperforms BN in general, with only the top-5 accuracy for the first method less than that of BN. However, there is a performance gap between the reproduced result and the accuracy in~\cite{he2016deep}, which might be explained by the different implementation details including the data augmentation, the architecture and hyper-parameters like the batch size and the learning rate. But for the evaluation method of fully-convolution recommended by~\cite{he2016deep}, the accuracy increases by over 1$\%$ using the unitization. In general, the unitization shows higher performance for classification tasks.

\section{Conclusion}
\label{sec:conclusion}
This paper proposes an ICS measure by using the EM distance, and provides a theoretical analysis of BN through the upper and lower bounds. The moment-dependent upper bound has shown that BN techniques can effectively control ICS only for the low-dimensional outputs with small noise in the moments, but would degrade in other cases. Meanwhile, the high-order moments and noise that are out of BN's control have great impact on the lower bound. Then, this paper proposes the unitization algorithm with the noise-free and moment-independent upper bound. By training the parameter in the unitization, the bound can be fine-tuned to further control ICS. The experiments demonstrate the proposed algorithm's control of high-order moments and performance on the benchmark datasets including CIFAR-10, CIFAR-100 and ImageNet.

\bibliographystyle{unsrt}  
\bibliography{references}

\newpage
\appendix
\setcounter{theorem}{0}
\setcounter{subtheoremcounter}{0}

\section{Proofs of the Theorems}
\subsection{Useful facts}
There are two facts that can be derived directly from the EM distance, and they'll be used in the proof of the theorems.
\begin{fact}
	\label{fact:1}
	\begin{equation*}
	\begin{split}
	W(p_l^{(t+\Delta t)}, p_l^{(t)}) = & \sup_{||f||_L \le 1} {\expectation}_{\bs{x} \sim p_l^{(t+\Delta t)}}[f(\bs{x})] - {\expectation}_{\bs{y} \sim p_l^{(t)}} [f(\bs{y})] \\
	= & \sup_{||f||_L \le 1} \big|{\expectation}_{\bs{x} \sim p_l^{(t+\Delta t)}}[f(\bs{x})] - {\expectation}_{\bs{y} \sim p_l^{(t)}} [f(\bs{y})]\big|
	\end{split}
	\end{equation*}
	since $f$ is a $1$-Lipschitz function iff $-f$ is a $1$-Lipschitz function, and 
	\begin{equation*}
	\begin{split}
	\big|{\expectation}_{\bs{x} \sim p_l^{(t+\Delta t)}}[f(\bs{x})] - {\expectation}_{\bs{y} \sim p_l^{(t)}} [f(\bs{y})]\big|
	= &\max\Bigg\{ {\expectation}_{\bs{x} \sim p_l^{(t+\Delta t)}}[f(\bs{x})] - {\expectation}_{\bs{y} \sim p_l^{(t)}} [f(\bs{y})], 
	{\expectation}_{\bs{x} \sim p_l^{(t+\Delta t)}}[-f(\bs{x})] - {\expectation}_{\bs{y} \sim p_l^{(t)}} [-f(\bs{y})]\Bigg\}
	\end{split}
	\end{equation*}
\end{fact}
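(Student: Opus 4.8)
The first equality in the statement is exactly the Kantorovich-Rubinstein duality already recorded in Eq.(\ref{eq:ics}), so the plan is to establish only the second equality: that replacing the signed difference by its absolute value inside the supremum leaves the value unchanged. To organize the argument I would abbreviate, for any measurable $f$, the signed functional
\begin{equation*}
D(f) = {\expectation}_{\bs{x} \sim p_l^{(t+\Delta t)}}[f(\bs{x})] - {\expectation}_{\bs{y} \sim p_l^{(t)}}[f(\bs{y})],
\end{equation*}
and note that $D$ is linear in $f$, whence $D(-f) = -D(f)$. The target identity is then $\sup_{||f||_L \le 1} D(f) = \sup_{||f||_L \le 1} |D(f)|$, which I would prove by two inequalities.

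The inequality $\le$ is immediate: since $D(f) \le |D(f)|$ for every fixed $f$, taking the supremum over the common feasible set $\{||f||_L \le 1\}$ gives $\sup_f D(f) \le \sup_f |D(f)|$. For the reverse inequality $\ge$, I would invoke the two observations flagged in the statement. First, by linearity $|D(f)| = \max\{D(f), -D(f)\} = \max\{D(f), D(-f)\}$. Second, $f$ is $1$-Lipschitz if and only if $-f$ is $1$-Lipschitz, so whenever $f$ is feasible so is $-f$; hence both $D(f)$ and $D(-f)$ are bounded above by $\sup_{||g||_L \le 1} D(g)$, and therefore so is their maximum $|D(f)|$. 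Taking the supremum over all feasible $f$ yields $\sup_f |D(f)| \le \sup_g D(g)$, and combining the two inequalities closes the argument.

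There is essentially no obstacle here: the result is a direct consequence of the symmetry of the $1$-Lipschitz constraint under negation together with the elementary identity $|a| = \max\{a, -a\}$. The only point worth stating carefully is that the two suprema range over the same feasible set, which holds precisely because $\{||f||_L \le 1\}$ is invariant under the map $f \mapsto -f$; without this invariance the step bounding $D(-f)$ by $\sup_{||g||_L \le 1} D(g)$ would fail.
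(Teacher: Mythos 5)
Your proposal is correct and follows essentially the same route as the paper: the paper's own justification rests on exactly the two observations you use, namely that $f$ is $1$-Lipschitz iff $-f$ is $1$-Lipschitz (so the feasible set is negation-invariant) and that the absolute difference equals the maximum of the signed differences for $f$ and $-f$. You merely make the two-inequality structure explicit, which the paper leaves implicit in its terse ``since \ldots'' clause.
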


\vspace{20mm}
\begin{fact}
	\label{fact:2}
	\begin{equation*}
	\begin{split}
	W(p_l^{(t+\Delta t)}, p_l^{(t)}) = & \sup_{||f||_L \le 1} {\expectation}_{\bs{x} \sim p_l^{(t+\Delta t)}}[f(\bs{x})] - {\expectation}_{\bs{y} \sim p_l^{(t)}} [f(\bs{y})] \\
	= & \sup_{f: ||f||_L \le 1, f(\bs{0}) = 0} {\expectation}_{\bs{x} \sim p_l^{(t+\Delta t)}}[f(\bs{x})] - {\expectation}_{\bs{y} \sim p_l^{(t)}} [f(\bs{y})]
	\end{split}
	\end{equation*}
	since for any $1$-Lipschitz function $f$, 
	\begin{equation*}
	\begin{split}
	{\expectation}_{\bs{x} \sim p_l^{(t+\Delta)}}\big[f(\bs{x})\big] - {\expectation}_{\bs{y} \sim p_l^{(t)}}\big[f(\bs{y})\big] 
	=& {\expectation}_{\bs{x} \sim p_l^{(t+\Delta)}}\big[f(\bs{x})\big] - f(\bs{0}) - {\expectation}_{\bs{y} \sim p_l^{(t)}}\big[f(\bs{y})\big] + f(\bs{0}) \\
	=& {\expectation}_{\bs{x} \sim p_l^{(t+\Delta)}}\big[f(\bs{x}) - f(\bs{0})\big] - {\expectation}_{\bs{y} \sim p_l^{(t)}}\big[f(\bs{y}) - f(\bs{0})\big] \\
	=& {\expectation}_{\bs{x} \sim p_l^{(t+\Delta)}}\big[\tilde{f}(\bs{x})\big] - {\expectation}_{\bs{y} \sim p_l^{(t)}}\big[\tilde{f}(\bs{y})\big]
	\end{split}
	\end{equation*}
	where $\tilde{f}(\bs{x}) = f(\bs{x}) - f(\bs{0})$ is still a $1$-Lipschitz function, and satisfies $\tilde{f}(\bs{0}) = 0$.
\end{fact}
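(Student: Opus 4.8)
The plan is to show the two suprema coincide by proving each dominates the other, the entire nontrivial content residing in one of the two inequalities. First I would observe that the second supremum ranges over a \emph{subset} of the functions admitted by the first, namely those $1$-Lipschitz $f$ that additionally satisfy $f(\bs{0}) = 0$. Since a supremum over a smaller feasible set cannot exceed a supremum over a larger one, the second supremum is trivially at most the first. The claim therefore reduces entirely to the reverse inequality, and the natural strategy is to exhibit, for every $f$ feasible in the unconstrained supremum, a competitor feasible in the constrained supremum that attains the same objective value.

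Given any $1$-Lipschitz $f$, I would set $\tilde{f}(\bs{x}) = f(\bs{x}) - f(\bs{0})$ and verify three properties. First, $\tilde{f}$ remains $1$-Lipschitz, because $|\tilde{f}(\bs{x}) - \tilde{f}(\bs{y})| = |f(\bs{x}) - f(\bs{y})| \le \|\bs{x} - \bs{y}\|$; subtracting a fixed constant leaves the Lipschitz seminorm unchanged. Second, $\tilde{f}(\bs{0}) = f(\bs{0}) - f(\bs{0}) = 0$ by construction, so $\tilde{f}$ is admissible for the constrained supremum. Third, the objective is invariant: expanding the difference of expectations and using that both $p_l^{(t+\Delta t)}$ and $p_l^{(t)}$ are probability measures, the additive constant $f(\bs{0})$ contributes equally to both expectations and cancels, giving $\expectation_{\bs{x} \sim p_l^{(t+\Delta t)}}[\tilde{f}(\bs{x})] - \expectation_{\bs{y} \sim p_l^{(t)}}[\tilde{f}(\bs{y})] = \expectation_{\bs{x} \sim p_l^{(t+\Delta t)}}[f(\bs{x})] - \expectation_{\bs{y} \sim p_l^{(t)}}[f(\bs{y})]$.

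Combining these three points, every value achievable in the unconstrained supremum is matched by a function feasible for the constrained one, so the first supremum is at most the second; together with the trivial direction this yields equality. The only step deserving care — and what I would identify as the sole, modest obstacle — is the cancellation in the third property: it depends essentially on both marginals having total mass one, so that $\expectation_{\bs{x} \sim p_l^{(t+\Delta t)}}[f(\bs{0})] = \expectation_{\bs{y} \sim p_l^{(t)}}[f(\bs{0})] = f(\bs{0})$. I would therefore make this normalization explicit before subtracting. Everything else follows directly from the translation-invariance of the Lipschitz seminorm under additive constants.
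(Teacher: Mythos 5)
Your proposal is correct and follows essentially the same route as the paper: both arguments rest on replacing $f$ by $\tilde{f}(\bs{x}) = f(\bs{x}) - f(\bs{0})$, noting that this preserves the $1$-Lipschitz property, forces $\tilde{f}(\bs{0}) = 0$, and leaves the difference of expectations unchanged because the constant $f(\bs{0})$ integrates to itself against each probability measure. Your only addition is to spell out the trivial subset inequality, which the paper leaves implicit; this is a matter of presentation, not substance.
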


\newpage
\subsection{Proofs}
\begin{theorem}
	Suppose that $|\mu_i^{(t)}| < \infty, |\mu_i^{(t + \Delta t)}| < \infty, 1 \le i \le d$. Then, 
	\begin{equation*}
	\begin{split}
	W(p_l^{(t+\Delta t)}, p_l^{(t)}) \le & \sum_{i=1}^d (\sigma_{i}^{(t+\Delta t)})^2 + \sum_{i=1}^d (\sigma_{i}^{(t)})^2 
	+ \Big(\sum_{i=1}^d (\mu_i^{(t+\Delta t)} - \mu_i^{(t)})^2\Big)^{\frac{1}{2}} + 2
	\end{split}
	\end{equation*}
\end{theorem}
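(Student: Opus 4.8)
The plan is to bound, for an arbitrary fixed $1$-Lipschitz function $f$, the quantity $\expectation_{\bs{x}\sim p_l^{(t+\Delta t)}}[f(\bs{x})] - \expectation_{\bs{y}\sim p_l^{(t)}}[f(\bs{y})]$ and then take the supremum over $f$, using the dual form Eq.(\ref{eq:ics}). The central idea is to insert the two mean vectors $\bs{\mu}^{(t+\Delta t)}$ and $\bs{\mu}^{(t)}$ and split the difference into three telescoping pieces:
\[
\expectation_{\bs{x}}[f(\bs{x}) - f(\bs{\mu}^{(t+\Delta t)})] + \big(f(\bs{\mu}^{(t+\Delta t)}) - f(\bs{\mu}^{(t)})\big) + \expectation_{\bs{y}}[f(\bs{\mu}^{(t)}) - f(\bs{y})].
\]
Each piece will then be controlled purely by the $1$-Lipschitz property of $f$, so nothing beyond Lipschitzness (and finiteness of the means, which is the stated hypothesis) is needed; in particular Facts \ref{fact:1}--\ref{fact:2} are not essential here, since I center at the means rather than at $\bs{0}$.

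For the middle piece, Lipschitzness gives directly $|f(\bs{\mu}^{(t+\Delta t)}) - f(\bs{\mu}^{(t)})| \le ||\bs{\mu}^{(t+\Delta t)} - \bs{\mu}^{(t)}||_2 = \big(\sum_{i=1}^d (\mu_i^{(t+\Delta t)} - \mu_i^{(t)})^2\big)^{1/2}$, which reproduces the mean term of the bound. For the first piece I would use $|f(\bs{x}) - f(\bs{\mu}^{(t+\Delta t)})| \le ||\bs{x} - \bs{\mu}^{(t+\Delta t)}||_2$, pass the absolute value through the expectation, and apply Jensen's inequality to the concave square root to obtain $\expectation_{\bs{x}}[||\bs{x} - \bs{\mu}^{(t+\Delta t)}||_2] \le \big(\expectation_{\bs{x}}[||\bs{x} - \bs{\mu}^{(t+\Delta t)}||_2^2]\big)^{1/2} = \big(\sum_{i=1}^d (\sigma_i^{(t+\Delta t)})^2\big)^{1/2}$, since the squared norm decomposes coordinatewise into the per-coordinate variances. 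The third piece is handled identically and yields $\big(\sum_{i=1}^d (\sigma_i^{(t)})^2\big)^{1/2}$.

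The one slightly loose step---and the place to be careful---is reconciling these square-rooted variance sums with the un-rooted sums appearing in the statement. Here I would invoke the elementary inequality $\sqrt{s} \le s + 1$ (indeed $\sqrt{s} \le (s+1)/2$ by AM-GM), valid for every $s \ge 0$, applied with $s = \sum_i (\sigma_i^{(t+\Delta t)})^2$ and with $s = \sum_i (\sigma_i^{(t)})^2$. This converts each $\big(\sum_i \sigma_i^2\big)^{1/2}$ into $\sum_i \sigma_i^2 + 1$, and the two resulting $+1$'s combine into the additive constant $2$. Assembling the three bounds and taking the supremum over all $1$-Lipschitz $f$ then gives exactly the claimed inequality. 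I expect the main obstacle to be purely bookkeeping: ensuring that the two separate applications of Jensen (moving the norm inside the expectation, then pulling the square root outside) are correctly ordered, and that the coordinatewise identity $\expectation[||\bs{x}-\bs{\mu}||_2^2] = \sum_i \sigma_i^2$ is applied with the correct distribution in each of the two variance terms.
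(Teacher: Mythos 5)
Your proof is correct, and its skeleton is the same as the paper's: both telescope through the two mean vectors, apply $1$-Lipschitzness to each of the three pieces, and obtain the term $\big(\sum_i (\mu_i^{(t+\Delta t)}-\mu_i^{(t)})^2\big)^{1/2}$ identically. The difference lies entirely in how $\expectation\big[\|\bs{x}-\bs{\mu}\|_2\big]$ is converted into a variance sum. The paper splits this expectation on the event $\|\bs{x}-\bs{\mu}\|_2 \le 1$ versus $>1$, bounding the first part by $1$ and the second via $r \le r^2$ (valid for $r\ge 1$), which lands directly on $1+\sum_i \sigma_i^2$. You instead apply Jensen (equivalently Cauchy--Schwarz) to get the tighter $\big(\sum_i \sigma_i^2\big)^{1/2}$ and only then relax with $\sqrt{s}\le s+1$. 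Both are elementary and both yield the stated constant $2$, but your route passes through a strictly stronger intermediate inequality, namely $W(p_l^{(t+\Delta t)}, p_l^{(t)}) \le \big(\sum_i (\sigma_i^{(t+\Delta t)})^2\big)^{1/2} + \big(\sum_i (\sigma_i^{(t)})^2\big)^{1/2} + \big\|\bs{\mu}^{(t+\Delta t)}-\bs{\mu}^{(t)}\big\|_2$, which scales like $\sqrt{d}$ rather than $d$ for normalized outputs and would actually sharpen the paper's subsequent discussion of the bound in Eq.~(\ref{eq:bnnoise}) being loose in high dimensions. You are also right that Facts~\ref{fact:1} and~\ref{fact:2} are dispensable here: since your upper bound on $\expectation_{\bs{x}}[f(\bs{x})]-\expectation_{\bs{y}}[f(\bs{y})]$ is uniform over all $1$-Lipschitz $f$, taking the supremum directly suffices, whereas the paper invokes Fact~\ref{fact:1} only to route the argument through absolute values.
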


\begin{proof}
	Denote by ${\indicator}_A(x)$ an indicator function defined by ${\indicator}_A(x) = 1$ if $x \in A$, otherwise ${\indicator}_A(x) = 0$. Then, according to \tbf{Fact 1}, 
	\begin{equation*}
	\begin{split}
	W(p_l^{(t+\Delta t)}, p_l^{(t)}) 
	=& \sup_{||f||_L \le 1} \big|{\expectation}_{\bs{x} \sim p_l^{(t+\Delta t)}}[f(\bs{x})] - {\expectation}_{\bs{y} \sim p_l^{(t)}} [f(\bs{y})]\big| \\
	\le& \sup_{||f||_L \le 1} \big|{\expectation}_{\bs{x} \sim p_l^{(t+\Delta t)}}[f(\bs{x}) - f(\bs{\mu}^{(t+\Delta t)})]  
	- {\expectation}_{\bs{y} \sim p_l^{(t)}} [f(\bs{y})- f(\bs{\mu}^{(t)})]\big| + \big| f(\bs{\mu}^{(t+\Delta t)})- f(\bs{\mu}^{(t)}) \big| \\
	\le& \sup_{||f||_L \le 1} {\expectation}_{\bs{x} \sim p_l^{(t+\Delta t)}}\big[\big|f(\bs{x}) - f(\bs{\mu}^{(t+\Delta t)})\big|\big] 
	+ {\expectation}_{\bs{y} \sim p_l^{(t)}}\big[\big|f(\bs{y}) - f(\bs{\mu}^{(t)})\big|\big]
	+ \big| f(\bs{\mu}^{(t+\Delta t)})- f(\bs{\mu}^{(t)}) \big| \\
	\le& \sup_{||f||_L \le 1} {\expectation}_{\bs{x} \sim p_l^{(t+\Delta t)}}\big[\big|\big|\bs{x} - \bs{\mu}^{(t+\Delta t)}\big|\big|_2\big] 
	+ {\expectation}_{\bs{y} \sim p_l^{(t)}}\big[\big|\big|\bs{y} - \bs{\mu}^{(t)}\big|\big|_2\big] + \big|\big|\bs{\mu}^{(t+\Delta t)}- \bs{\mu}^{(t)}\big|\big|_2 \\
	=&  {\expectation}_{\bs{x} \sim p_l^{(t+\Delta t)}}\big[{\indicator}_{||\bs{x} - \bs{\mu}^{(t+\Delta t)}||_2 \le 1}(\bs{x})\big|\big|\bs{x} - \bs{\mu}^{(t+\Delta t)}\big|\big|_2\big]
	+ {\expectation}_{\bs{x} \sim p_l^{(t+\Delta t)}}\big[{\indicator}_{||\bs{x} - \bs{\mu}^{(t+\Delta t)}||_2 > 1}(\bs{x})\big|\big|\bs{x} - \bs{\mu}^{(t+\Delta t)}\big|\big|_2\big]\\
	& + {\expectation}_{\bs{y} \sim p_l^{(t)}}\big[{\indicator}_{||\bs{y} - \bs{\mu}^{(t)}||_2 \le 1}(\bs{y})\big|\big|\bs{y} - \bs{\mu}^{(t)}\big|\big|_2\big] 
	+ {\expectation}_{\bs{y} \sim p_l^{(t)}}\big[{\indicator}_{||\bs{y} - \bs{\mu}^{(t)}||_2 > 1}(\bs{y})\big|\big|\bs{y} - \bs{\mu}^{(t)}\big|\big|_2\big] 
	+ \big|\big|\bs{\mu}^{(t+\Delta t)}- \bs{\mu}^{(t)}\big|\big|_2\\
	\le &  {\expectation}_{\bs{x} \sim p_l^{(t+\Delta t)}}\big[{\indicator}_{||\bs{x} - \bs{\mu}^{(t+\Delta t)}||_2 \le 1}(\bs{x})\cdot 1\big] 
	+ {\expectation}_{\bs{x} \sim p_l^{(t+\Delta t)}}\big[{\indicator}_{||\bs{x} - \bs{\mu}^{(t+\Delta t)}||_2 > 1}(\bs{x})\big|\big|\bs{x} - \bs{\mu}^{(t+\Delta t)}\big|\big|_2^2\big] \\
	& + {\expectation}_{\bs{y} \sim p_l^{(t)}}\big[{\indicator}_{||\bs{y} - \bs{\mu}^{(t)}||_2 \le 1}(\bs{y})\cdot 1\big] 
	+ {\expectation}_{\bs{y} \sim p_l^{(t)}}\big[{\indicator}_{||\bs{y} - \bs{\mu}^{(t)}||_2 > 1}(\bs{y})\big|\big|\bs{y} - \bs{\mu}^{(t)}\big|\big|_2^2\big] 
	+ \big|\big|\bs{\mu}^{(t+\Delta t)}- \bs{\mu}^{(t)}\big|\big|_2\\
	\le &  1 + {\expectation}_{\bs{x} \sim p_l^{(t+\Delta t)}}\bigg[\sum_{i=1}^d (x_i - \mu_i^{(t + \Delta)})^2 \bigg] + 1 
	+ {\expectation}_{\bs{y} \sim p_l^{(t)}}\bigg[\sum_{i=1}^d (y_i - \mu_i^{(t)})^2 \bigg] + \big|\big|\bs{\mu}^{(t+\Delta t)}- \bs{\mu}^{(t)}\big|\big|_2 \\
	=& \sum_{i=1}^d (\sigma_{i}^{(t+\Delta)})^2 + \sum_{i=1}^d (\sigma_{i}^{(t)})^2 + \Big(\sum_{i=1}^d (\mu_i^{(t+\Delta t)} - \mu_i^{(t)})^2\Big)^{\frac{1}{2}} + 2
	\end{split}
	\end{equation*}
\end{proof}

\newpage
\begin{theorem}
	Suppose that $C > 0$ is a real number, and $p \ge 2$ is an integer. Then, 
	\begin{equation*}
	\begin{split}
	W(p_l^{(t + \Delta t)}, p_l^{(t)}) =& \sup_{||f||_L \le 1} {\expectation}_{\bs{x} \sim p_l^{(t + \Delta t)}}[f(\bs{y})] - {\expectation}_{\bs{y} \sim p_l^{(t)}}[f(\bs{y})] \\
	\ge & \big|{\expectation}_{\bs{x} \sim p_l^{(t + \Delta t)}}[f_{p, C}(\bs{x})] - {\expectation}_{\bs{y} \sim p_l^{(t)}}[f_{p, C}(\bs{y})]\big| \\
	\end{split}
	\end{equation*}
	where $f_{p, C}$ is the $1$-Lipschitz function defined as 
	\begin{equation*}
	\begin{split}
	f_{p, C}(\bs{x}) = \dfrac{1}{pC^{p-1}d^{\frac{1}{2}}}\bigg(\sum_{|x_i| \le C} x_i^p + \sum_{x_i < -C} (-C)^p + \sum_{x_i > C} C^p\bigg)
	\end{split}
	\end{equation*}
\end{theorem}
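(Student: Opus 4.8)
The crucial observation is that, since $W(p_l^{(t+\Delta t)}, p_l^{(t)})$ is a \emph{supremum} over the entire $1$-Lipschitz function space, the inequality is immediate \emph{once} we verify that the explicitly constructed $f_{p,C}$ is genuinely $1$-Lipschitz; the absolute value on the right-hand side is then justified by \textbf{Fact 1}. So the whole content of the theorem reduces to a Lipschitz-constant computation for $f_{p,C}$, and the plan is to exploit the fact that $f_{p,C}$ is \emph{separable} across coordinates.

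First I would introduce the scalar ``clamped'' function $\phi$, equal to $u^p$ on $[-C,C]$, to $(-C)^p$ for $u < -C$, and to $C^p$ for $u > C$, so that $f_{p,C}(\bs{x}) = \tfrac{1}{pC^{p-1}d^{1/2}}\sum_{i=1}^d \phi(x_i)$. I would check that $\phi$ is continuous at the two junction points $u = \pm C$ (the one-sided values agree, $\phi(C)=C^p$ and $\phi(-C)=(-C)^p$), so that $\phi$ is continuous and piecewise differentiable. Next I would bound its derivative: on the interior $|u|<C$ we have $\phi'(u) = p\,u^{p-1}$, hence $|\phi'(u)| = p|u|^{p-1} \le pC^{p-1}$, while on $|u|>C$ we have $\phi'(u)=0$. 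A continuous function with derivative bounded by $pC^{p-1}$ almost everywhere satisfies $|\phi(u)-\phi(v)| \le pC^{p-1}|u-v|$ for all $u,v$.

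Then, assembling the full function coordinatewise,
\begin{equation*}
|f_{p,C}(\bs{x}) - f_{p,C}(\bs{y})| \le \frac{1}{pC^{p-1}d^{1/2}}\sum_{i=1}^d |\phi(x_i)-\phi(y_i)| \le \frac{1}{d^{1/2}}\sum_{i=1}^d |x_i - y_i| = \frac{1}{d^{1/2}}\|\bs{x}-\bs{y}\|_1 .
\end{equation*}
Applying Cauchy--Schwarz in the form $\|\bs{z}\|_1 \le d^{1/2}\|\bs{z}\|_2$ cancels the $d^{1/2}$ factor and yields $|f_{p,C}(\bs{x})-f_{p,C}(\bs{y})| \le \|\bs{x}-\bs{y}\|_2$, i.e.\ $\|f_{p,C}\|_L \le 1$. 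Substituting $f = f_{p,C}$ into the supremum of \textbf{Fact 1} then gives the claimed lower bound directly.

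The main obstacle---really the only nontrivial point---is pinning the scalar Lipschitz constant down exactly: the normalizing factor $1/(pC^{p-1}d^{1/2})$ is engineered precisely so that the worst-case slope $pC^{p-1}$ of $\phi$ and the $\ell_1$-to-$\ell_2$ conversion factor $d^{1/2}$ both cancel, leaving the constant $1$. I would be mildly careful that the bound $|u|^{p-1}\le C^{p-1}$ is used on the closed interval $[-C,C]$ (valid since $p\ge 2$ makes $|u|^{p-1}$ nondecreasing in $|u|$) and that the two non-differentiable junction points form a measure-zero set that does not affect the Lipschitz estimate.
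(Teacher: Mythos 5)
Your proposal is correct, and it reaches the goal by a cleaner route than the paper does at the one step where real work is required. Both arguments share the same skeleton, which is essentially forced by the statement: invoke Fact~1 (so the absolute value comes for free, since $f$ is $1$-Lipschitz iff $-f$ is), and reduce everything to verifying $\|f_{p,C}\|_L \le 1$; both also finish with the same $\|\bs{z}\|_1 \le d^{1/2}\|\bs{z}\|_2$ conversion (you call it Cauchy--Schwarz, the paper calls it H\"older---same inequality). The difference is how the coordinatewise Lipschitz constant $pC^{p-1}$ is obtained. The paper expands $|f_b(\bs{v}) - f_b(\bs{w})|$ directly and runs a nine-way case analysis on where each pair $(v_i, w_i)$ sits relative to $[-C,C]$, using the algebraic factorization $a^p - b^p = (a-b)\sum_{q=0}^{p-1}a^{p-1-q}b^q$ together with observations like $|v_i - (-C)| \le |v_i - w_i|$ when $w_i < -C \le v_i$. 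You instead exploit separability: $f_{p,C}$ is a normalized sum of copies of the scalar clamp $\phi$, and $\phi$ is continuous, piecewise $C^1$, with $|\phi'| \le pC^{p-1}$ almost everywhere, hence $pC^{p-1}$-Lipschitz. Your route buys brevity, eliminates the combinatorial bookkeeping entirely, and makes transparent why the normalizing constant is exactly $1/(pC^{p-1}d^{1/2})$ (worst-case slope at $|u|=C$ times the $\ell_1$-to-$\ell_2$ factor); the paper's route buys a purely algebraic, self-contained computation that never appeals to the derivative-bound criterion. One point of care in your version, which you do handle: the implication ``derivative bounded a.e.\ $\Rightarrow$ Lipschitz'' is false for general functions (the Cantor function is the standard counterexample), so the continuity and piecewise smoothness of $\phi$ that you verify are genuinely needed---e.g., split $[u,v]$ at the junction points $\pm C$ and apply the mean value theorem on each piece.
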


\begin{proof}
	According to \tbf{Fact 1}, it's obvious that the inequality holds if $f_{p, C}$ is a $1$-Lipschitz function. Thus, only the proof of Lipschitz continuity of $f_{p, C}$ is required. For convenience, let $f_{p, C} = f_b / (pC^{p-1}d^{\frac{1}{2}})$, where
	\begin{equation*}
	f_b(\bs{x}) = \sum_{|x_i| \le C} x_i^p + \sum_{x_i < -C} (-C)^p + \sum_{x_i > C} C^p
	\end{equation*}
	Then, we'll prove the Lipschitz continuity of $f_b$. In fact, for any $\bs{v}, \bs{w} \in \realnumber^d$, 
	\begin{equation*}
	\begin{split}
	|f_b(\bs{v}) - f_b(\bs{w})| 
	=& \bigg|\sum_{|v_i| \le C} v_i^p + \sum_{v_i < -C} (-C)^p + \sum_{v_i > C} C^p 
	- \sum_{|w_i| \le C} w_i^p - \sum_{w_i < -C} (-C)^p - \sum_{w_i > C} C^p\bigg|\\
	=& \bigg|\sum_{i:|v_i| \le C, |w_i| \le C}(v_i^p - w_i^p) + \sum_{i: |v_i|\le C, w_i < -C}[v_i^p - (-C)^p] 
	+ \sum_{i: |v_i|\le C, w_i > C}(v_i^p - C^p) \\
	&+ \sum_{i: v_i < -C, |w_i| \le C} [(-C)^p - w_i^p] + \sum_{i: v_i < -C, w_i < -C}[(-C)^p - (-C)^p]
	+ \sum_{i: v_i < -C, w_i > C}[(-C)^p - C^p] \\
	&+ \sum_{i: v_i > C, |w_i| \le C} (C^p - w_i^p) 
	+ \sum_{i: v_i > C, w_i < -C}[C^p - (-C)^p] + \sum_{i: v_i > C, w_i > C}(C^p - C^p) \bigg| \\
	=& \bigg|\sum_{i:|v_i| \le C, |w_i| \le C} (v_i - w_i)\sum_{q = 0}^{p-1} v_i^{p-1-q}w_i^q 
	+ \sum_{i: |v_i| \le C, w_i < -C} [v_i - (-C)] \sum_{q = 0}^{p-1} v_i^{p-1-q}(-C)^q \\
	& + \sum_{i: |v_i| \le C, w_i > C}(v_i - C) \sum_{q = 0}^{p-1} v_i^{p-1-q}C^q 
	+ \sum_{i: v_i < -C, |w_i| \le C}(-C - w_i) \sum_{q = 0}^{p-1} (-C)^{p-1-q}w_i^q \\
	& + \sum_{i: v_i < -C, w_i > C}(-C - C)\sum_{q = 0}^{p-1} (-C)^{p-1-q}C^q 
	+ \sum_{i: v_i > C, |w_i| \le C}(C - w_i)\sum_{q=0}^{p-1} C^{p-1-q}w_i^q\\
	& + \sum_{i: v_i > C, w_i < -C}[C - (-C)]\sum_{q = 0}^{p-1} C^{p-1-q} (-C)^{q} \bigg| \\
	\le& pC^{p-1}\bigg(\sum_{i:|v_i| \le C, |w_i| \le C} |v_i - w_i|
	+ \sum_{i: |v_i| \le C, w_i < -C} |v_i - (-C)| + \sum_{i: |v_i| \le C, w_i > C}|v_i - C|  \\
	& + \sum_{i: v_i < -C, |w_i| \le C}|-C - w_i| + \sum_{i: v_i < -C, w_i > C}|-C - C|
	+ \sum_{i: v_i > C, |w_i| \le C}|C - w_i| \\
	&+ \sum_{i: v_i > C, w_i < -C}|C - (-C)| \bigg)\\
	\end{split}
	\end{equation*}
	\begin{equation*}
	\begin{split}
	\le& pC^{p-1}\bigg(\sum_{i:|v_i| \le C, |w_i| \le C} |v_i - w_i| 
	+ \sum_{i: |v_i| \le C, w_i < -C} |v_i - w_i| + \sum_{i: |v_i| \le C, w_i > C}|v_i - w_i| \\
	& + \sum_{i: v_i < -C, |w_i| \le C}|v_i - w_i| + \sum_{i: v_i < -C, w_i > C}|v_i - w_i| 
	+ \sum_{i: v_i > C, |w_i| \le C}|v_i - w_i| + \sum_{i: v_i > C, w_i < -C}|v_i - w_i| \bigg)\\
	\le& pC^{p-1}\sum_{i=1}^d |v_i - w_i| \\
	=& pC^{p-1} ||\bs{v} - \bs{w}||_1 
	\end{split}
	\end{equation*}
	Note that H\"older's inequality implies that
	\begin{equation*}
	\begin{split}
	||\bs{x}||_1 = \sum_{i = 1}^d |x_i| \le \Bigg(\sum_{j=1}^d |x_j|^2 \Bigg)^{1 / 2} \Bigg(\sum_{k=1}^d 1 \Bigg)^{1 / 2} = d^{\frac{1}{2}} ||\bs{x}||_2
	\end{split}
	\end{equation*}
	for any $\bs{x} \in \realnumber^d$. Thus, 
	\begin{equation*}
	|f_b(\bs{v}) - f_b(\bs{w})| \le pC^{p-1}d^{\frac{1}{2}} ||\bs{v} - \bs{w}||_2
	\end{equation*}
	Therefore, 
	\begin{equation*}
	|f_{p, C}(\bs{v}) - f_{p, C}(\bs{w})| = \dfrac{1}{pC^{p-1}d^{\frac{1}{2}}} |f_b(\bs{v}) - f_b(\bs{w})| \le ||\bs{v} - \bs{w}||_2
	\end{equation*} 
	implying $f_{p, C}$ is a $1$-Lipschitz function.
\end{proof}

\noindent
\tbf{Example of the Unbounded EM Distance}: Given $C^\prime > 0$, consider the two distributions as follows:
\begin{equation*}
p_l^{(t + \Delta t)}(\bs{x}) = \left\{
\begin{array}{ll}
\dfrac{1}{\int_{[C^\prime/2, C^\prime]^d} 1 d\bs{x}} &, \bs{x} \in [C^\prime/2, C^\prime]^d \\
0 &, other
\end{array}
\right.
\end{equation*}
and
\begin{equation*}
p_l^{(t)}(\bs{y}) = \left\{
\begin{array}{ll}
\dfrac{1}{\int_{[0, C^\prime/4]^d} 1 d\bs{y}} &, \bs{y} \in [0, C^\prime/4]^d \\
0 &, other
\end{array}
\right.
\end{equation*}
Then, the supports of the distributions are subsets of $[0, C^\prime]^d$ and for any $i \in \{1, 2, \ldots, d\}$, 
\begin{equation*}
\begin{split}
{\expectation}_{\bs{x} \sim p_l^{(t + \Delta t)}}[x_i^p] > \dfrac{C^{\prime p}}{2^p} 
, {\expectation}_{\bs{y} \sim p_l^{(t)}}[y_i^p] < \dfrac{C^{\prime p}}{4^p} 
\end{split}
\end{equation*}
Thus, according to \tbf{Theorem 2}, the lower bound on the EM distance between $p_l^{(t + \Delta)}$ and $p_l^{(t)}$ is
\begin{equation*}
\begin{split}
W(p_l^{(t + \Delta t)}, p_l^{(t)}) \ge & \dfrac{1}{pC^{\prime p-1}d^{\frac{1}{2}}} \bigg|{\expectation}_{\bs{x} \sim p_l^{(t + \Delta t)}}\bigg[\sum_{i=1}^d x_i^p\bigg] - {\expectation}_{\bs{y} \sim p_l^{(t)}}\bigg[\sum_{i=1}^d y_i^p\bigg]\bigg| \\
= & \dfrac{1}{pC^{\prime p-1}d^{\frac{1}{2}}} \bigg|\sum_{i=1}^d{\expectation}_{\bs{x} \sim p_l^{(t + \Delta t)}}[x_i^p] - {\expectation}_{\bs{y} \sim p_l^{(t)}}[y_i^p]\bigg| \\
> & \dfrac{(2^{-p} - 4^{-p})d^{\frac{1}{2}}}{p} C^\prime
\end{split}
\end{equation*}
Note that $p$ and $d$ are fixed, and then the lower bound is dominated by $C^\prime$. Thus, the distance is unbounded and would go to infinity as $C^\prime \rightarrow \infty$. Intuitively, the samples from $p_l^{(t + \Delta t)}$ can be regarded as the scaled and shifted samples from $p_l^{(t)}$. The unstable scale and center of the distributions lead to the unbounded lower bound, implying the unbounded distance and upper bound. The distributions of deep layers' outputs might perform in the same way if they are not normalized. In contrast, the upper bound on the distance for the outputs processed by BN is relatively stable (though it depends on the moments with noise), and can constrain the distance to a reasonable range in some cases discussed before.

\newpage
\stepcounter{subtheoremcounter}
\stepcounter{subtheoremcounter}
\stepcounter{subtheoremcounter}
\begin{subtheorem}
	Suppose that for $\bs{x} \sim p_l^{(t)}$, $g(\bs{x}) \sim p_U^{(t)}$. Then, 
	\begin{equation*}
	\begin{split}
	W(p_U^{(t + \Delta t)}, p_U^{(t)}) \le 2
	\end{split}
	\end{equation*}
\end{subtheorem}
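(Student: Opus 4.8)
The plan is to exploit the single crucial feature of the vanilla unitization: it sends every input to the unit sphere. First I would observe that for all $\bs{x}$ we have $||g(\bs{x})||_2 = 1$, since either $||\bs{x}||_2 \neq 0$ and $g(\bs{x}) = \bs{x}/||\bs{x}||_2$ has unit norm by construction, or $||\bs{x}||_2 = 0$ and $g(\bs{x}) = \bs{c}$, which is a unit vector by assumption. Hence both $p_U^{(t + \Delta t)}$ and $p_U^{(t)}$ are supported entirely on the unit sphere $\set{\bs{u} : ||\bs{u}||_2 = 1}$, regardless of the original distributions $p_l^{(t + \Delta t)}$ and $p_l^{(t)}$.

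Next I would invoke \tbf{Fact 2} to restrict the supremum in the dual form to $1$-Lipschitz functions $f$ satisfying $f(\bs{0}) = 0$. This normalization is what makes the argument work: for any such $f$ and any unit vector $\bs{u}$, the $1$-Lipschitz property gives
\begin{equation*}
|f(\bs{u})| = |f(\bs{u}) - f(\bs{0})| \le ||\bs{u} - \bs{0}||_2 = 1.
\end{equation*}
Since $g(\bs{x})$ is always a unit vector, this yields $-1 \le f(g(\bs{x})) \le 1$ pointwise.

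I would then bound each expectation separately. Because $f(g(\bs{x})) \le 1$ holds pointwise, ${\expectation}_{\bs{x}\sim p_l^{(t+\Delta t)}}[f(g(\bs{x}))] \le 1$; because $f(g(\bs{y})) \ge -1$ holds pointwise, ${\expectation}_{\bs{y} \sim p_l^{(t)}}[f(g(\bs{y}))] \ge -1$. Subtracting gives
\begin{equation*}
{\expectation}_{\bs{x}\sim p_l^{(t+\Delta t)}}[f(g(\bs{x}))] - {\expectation}_{\bs{y} \sim p_l^{(t)}}[f(g(\bs{y}))] \le 1 - (-1) = 2
\end{equation*}
for every admissible $f$, and taking the supremum over $f$ delivers $W(p_U^{(t + \Delta t)}, p_U^{(t)}) \le 2$.

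There is no serious obstacle here; the result is essentially immediate once one notices that the unitized mass lives on the unit sphere and that \tbf{Fact 2} lets us center $f$ at the origin. The only point requiring a moment of care is the degenerate branch $||\bs{x}||_2 = 0$, where I must use the hypothesis that $\bs{c}$ is a unit vector to keep the image on the sphere; without that, the bound could fail at the origin. Everything else is a one-line Lipschitz estimate.
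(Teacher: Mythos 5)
Your proposal is correct and follows essentially the same route as the paper's own proof: both invoke \textbf{Fact 2} to restrict to $1$-Lipschitz functions vanishing at the origin, use $\big|f(g(\bs{x}))\big| = \big|f(g(\bs{x})) - f(\bs{0})\big| \le \big|\big|g(\bs{x})\big|\big|_2 = 1$ to get the pointwise bound $-1 \le f(g(\bs{x})) \le 1$, and then bound the two expectations by $1$ and $-1$ respectively to obtain the constant $2$. Your explicit remark that both unitized distributions are supported on the unit sphere (including the degenerate branch via the unit vector $\bs{c}$) is only stated implicitly in the paper, but the argument is the same.
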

\begin{proof}
	Note that for any $1$-Lipschitz function $f$ such that $f(\bs{0}) = 0$, 
	\begin{equation*}
	\begin{split}
	\big|f(g(\bs{x}))\big| = \big|f(g(\bs{x})) - f(\bs{0})\big| \le \big|\big|g(\bs{x}) - \bs{0}\big|\big|_2 = 1, \forall \bs{x}
	\end{split}
	\end{equation*}
	which yields
	\begin{equation*}
	-1 \le f(g(\bs{x}))  \le 1, \forall \bs{x}
	\end{equation*}
	Therefore, according to \tbf{Fact 2},
	\begin{equation*}
	\begin{split}
	W(p_U^{(t + \Delta t)}, p_U^{(t)}) 
	= &\sup_{||f||_L \le 1} {\expectation}_{\bs{x} \sim p_l^{(t+\Delta)}}\big[f(g(\bs{x}))\big] - {\expectation}_{\bs{y} \sim p_l^{(t)}}\big[f(g(\bs{y}))\big] \\
	= &\sup_{f: ||f||_L \le 1, f(\bs{0}) = 0} {\expectation}_{\bs{x} \sim p_l^{(t+\Delta)}}\big[f(g(\bs{x}))\big] - {\expectation}_{\bs{y} \sim p_l^{(t)}}\big[f(g(\bs{y}))\big] \\
	\le &\sup_{f: ||f||_L \le 1, f(\bs{0}) = 0} {\expectation}_{\bs{x} \sim p^{(t+\Delta)}}[1] - {\expectation}_{\bs{y} \sim p^{(t)}}[-1] \\
	= &2
	\end{split}
	\end{equation*}
\end{proof}

\begin{subtheorem}
	Suppose that for $\alpha \in [0, 1]$ and $\bs{x} \sim p_l^{(t)}$, $g(\bs{x}; \alpha) \sim p_U^{(t)}$. Then, 
	\begin{equation*}
	\begin{split}
	W(p_U^{(t + \Delta t)}, p_U^{(t)}) 
	\le & {\indicator}_{\alpha = 0}(\alpha) \cdot ({\expectation}_{\bs{x} \sim p_l^{(t+\Delta)}}[||\bs{x}||_2]
	+ {\expectation}_{\bs{y} \sim p_l^{(t)}}[||\bs{y}||_2]) + {\indicator}_{\alpha > 0}(\alpha) \cdot \dfrac{2}{\alpha}
	\end{split}
	\end{equation*}
\end{subtheorem}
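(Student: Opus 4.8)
The plan is to split the argument according to the two indicator terms in the bound, exactly matching the case $\alpha = 0$ and the case $\alpha > 0$.

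First I would treat $\alpha = 0$. Substituting into Eq.(\ref{eq:partlyunitized}) gives $g(\bs{x}; 0) = \bs{x} / (0 \cdot ||\bs{x}||_2 + 1) = \bs{x}$, so the transformation is the identity and $p_U^{(t)} = p_l^{(t)}$, $p_U^{(t+\Delta t)} = p_l^{(t+\Delta t)}$. By \tbf{Fact 2} I may restrict the supremum to $1$-Lipschitz functions $f$ with $f(\bs{0}) = 0$, for which $|f(\bs{x})| = |f(\bs{x}) - f(\bs{0})| \le ||\bs{x}||_2$. Bounding the difference of expectations by the sum of the absolute expectations then yields
\[
W(p_U^{(t+\Delta t)}, p_U^{(t)}) \le \expectation_{\bs{x} \sim p_l^{(t+\Delta t)}}[||\bs{x}||_2] + \expectation_{\bs{y} \sim p_l^{(t)}}[||\bs{y}||_2],
\]
which is precisely the $\alpha = 0$ term.

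The substance of the proof is the case $\alpha > 0$, where I aim to reproduce the vanilla unitization estimate by showing the unitized output has norm bounded by $1/\alpha$. Writing $r = ||\bs{x}||_2 \ge 0$, for $r > 0$ the definition gives $||g(\bs{x}; \alpha)||_2 = r / (\alpha r + (1 - \alpha))$, while $r = 0$ is sent to $\bs{0}$ when $\alpha < 1$ and to the unit vector $\bs{c}$ when $\alpha = 1$. I would study $\phi(r) = r/(\alpha r + 1 - \alpha)$ on $r \ge 0$: its derivative equals $(1-\alpha)/(\alpha r + 1 - \alpha)^2 \ge 0$, so $\phi$ is nondecreasing with $\phi(r) < \lim_{r \to \infty}\phi(r) = 1/\alpha$ for $\alpha \in (0,1)$, whereas $\phi \equiv 1$ when $\alpha = 1$. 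Checking the boundary values at $r = 0$ confirms $||g(\bs{x}; \alpha)||_2 \le 1/\alpha$ for every $\bs{x}$.

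With this uniform norm bound in hand, the remainder mirrors the vanilla case. Invoking \tbf{Fact 2} again, for any $1$-Lipschitz $f$ with $f(\bs{0}) = 0$ I have $|f(g(\bs{x}; \alpha))| \le ||g(\bs{x}; \alpha)||_2 \le 1/\alpha$, hence $-1/\alpha \le f(g(\bs{x}; \alpha)) \le 1/\alpha$ pointwise. Taking the supremum of $\expectation_{\bs{x} \sim p_l^{(t+\Delta t)}}[f(g(\bs{x}; \alpha))] - \expectation_{\bs{y} \sim p_l^{(t)}}[f(g(\bs{y}; \alpha))]$ then delivers the bound $2/\alpha$. The main obstacle is the $\alpha > 0$ norm computation, specifically the monotonicity of $\phi$ and the boundary $\alpha = 1$ where $r = 0$ maps to $\bs{c}$ rather than $\bs{0}$; once $||g(\bs{x}; \alpha)||_2 \le 1/\alpha$ is established, the conclusion is a direct reuse of the vanilla unitization estimate.
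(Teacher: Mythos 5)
Your proof is correct and follows essentially the same route as the paper's: both reduce via \textbf{Fact 2} to $1$-Lipschitz functions with $f(\bs{0})=0$, bound $|f(g(\bs{x};\alpha))| \le ||g(\bs{x};\alpha)||_2$ pointwise, and establish the norm estimate ($||\bs{x}||_2$ when $\alpha=0$, $1/\alpha$ when $\alpha>0$) before taking the supremum. The only difference is presentational: the paper carries both cases in one chain of indicator functions and gets $1/\alpha$ by dropping the nonnegative term $(1-\alpha)/||\bs{x}||_2$ from the denominator, whereas you split the cases upfront and prove the same bound via monotonicity of $\phi(r)=r/(\alpha r + 1-\alpha)$.
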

\begin{proof}
	Note that given a $1$-Lipschitz function $f$ that satisfies $f(\bs{0}) = 0$, for any $\bs{x}$ and $\alpha \in [0, 1]$, 
	\begin{equation*}
	\begin{split}
	\big|f(g(\bs{x}; \alpha))\big| 
	= & \big|f(g(\bs{x}; \alpha)) - f(\bs{0})\big|\\
	\le& \big|\big|g(\bs{x}; \alpha) - \bs{0}\big|\big|_2 \\
	=& {\indicator}_{||\bs{x}||_2 > 0}(\bs{x}) \cdot \dfrac{||\bs{x}||_2}{\alpha ||\bs{x}||_2 + (1 - \alpha) \times 1} 
	+ {\indicator}_{||\bs{x}||_2 = 0, \alpha = 1}(\bs{x}, \alpha) \cdot 1 
	+ {\indicator}_{||\bs{x}||_2 = 0, \alpha < 1}(\bs{x}, \alpha) \cdot 0 \\
	=& {\indicator}_{||\bs{x}||_2 > 0, \alpha = 0}(\bs{x}, \alpha) \cdot \dfrac{||\bs{x}||_2}{\alpha ||\bs{x}||_2 + (1 - \alpha) \times 1} 
	+ {\indicator}_{||\bs{x}||_2 > 0, \alpha > 0}(\bs{x}, \alpha) \cdot \dfrac{1}{\alpha + (1 - \alpha) / ||\bs{x}||_2}
	+ {\indicator}_{||\bs{x}||_2 = 0, \alpha = 1}(\bs{x}, \alpha) \cdot 1 \\
	\le& {\indicator}_{||\bs{x}||_2 > 0, \alpha = 0}(\bs{x}, \alpha) \cdot ||\bs{x}||_2
	+ {\indicator}_{||\bs{x}||_2 > 0, \alpha > 0}(\bs{x}, \alpha) \cdot \dfrac{1}{\alpha} 
	+ {\indicator}_{||\bs{x}||_2 = 0, \alpha > 0}(\bs{x}, \alpha) \cdot 1 \\
	\le& {\indicator}_{\alpha = 0}(\alpha) \cdot ||\bs{x}||_2 + {\indicator}_{\alpha > 0}(\alpha) \cdot \dfrac{1}{\alpha}
	\end{split}
	\end{equation*}
	which yields
	\begin{equation*}
	\begin{split}
	-{\indicator}_{\alpha = 0}(\alpha) \cdot ||\bs{x}||_2 - {\indicator}_{\alpha > 0}(\alpha) \cdot \dfrac{1}{\alpha} \le
	f(g(\bs{x}; \alpha)) \le {\indicator}_{\alpha = 0}(\alpha) \cdot ||\bs{x}||_2 + {\indicator}_{\alpha > 0}(\alpha) \cdot \dfrac{1}{\alpha} 
	\end{split}
	\end{equation*}
	Therefore, according to \tbf{Fact 2},
	\begin{equation*}
	\begin{split}
	W(p_U^{(t + \Delta t)}, p_U^{(t)}) = &\sup_{||f||_L \le 1} {\expectation}_{\bs{x} \sim p_l^{(t+\Delta)}}\big[f(g(\bs{x};\alpha))\big]
	- {\expectation}_{\bs{y} \sim p_l^{(t)}}\big[f(g(\bs{y};\alpha))\big] \\
	= &\sup_{f: ||f||_L \le 1, f(\bs{0}) = 0} {\expectation}_{\bs{x} \sim p_l^{(t+\Delta)}}\big[f(g(\bs{x};\alpha))\big] 
	- {\expectation}_{\bs{y} \sim p_l^{(t)}}\big[f(g(\bs{y};\alpha))\big] \\
	\le & {\indicator}_{\alpha = 0}(\alpha) \cdot ({\expectation}_{\bs{x} \sim p_l^{(t+\Delta)}}[||\bs{x}||_2] 
	+ {\expectation}_{\bs{y} \sim p_l^{(t)}}[||\bs{y}||_2]) + {\indicator}_{\alpha > 0}(\alpha) \cdot \dfrac{2}{\alpha}
	\end{split}
	\end{equation*}
\end{proof}

\newpage
\begin{subtheorem}
	Suppose that for $\bs{\alpha} = (\alpha_1, \alpha_2, \ldots, \alpha_d)$, where $\alpha_i \in [0, 1], 1 \le i \le d$, and $\bs{x} \sim p_l^{(t)}$, $g(\bs{x}; \bs{\alpha}) \sim p_U^{(t)}$. Then, 
	\begin{equation*}
	\begin{split}
	W(p_U^{(t + \Delta t)}, p_U^{(t)}) \le & {\indicator}_{\min_j \alpha_j > 0}(\bs{\alpha}) \cdot \dfrac{2}{\min_j \alpha_j} 
	+ {\indicator}_{\min_j \alpha_j = 0}(\bs{\alpha}) \cdot ({\expectation}_{\bs{x} \sim p_l^{(t+\Delta)}}[||\bs{x}||_2] 
	+ {\expectation}_{\bs{y} \sim p_l^{(t)}}[||\bs{y}||_2] + 2)
	\end{split}
	\end{equation*}
\end{subtheorem}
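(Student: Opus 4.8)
The plan is to follow the same template as the two preceding subtheorems: invoke \textbf{Fact 2} to restrict the supremum to $1$-Lipschitz functions $f$ with $f(\bs{0}) = 0$, so that for every such $f$ and every $\bs{x}$,
\begin{equation*}
\big|f(g(\bs{x}; \bs{\alpha}))\big| = \big|f(g(\bs{x}; \bs{\alpha})) - f(\bs{0})\big| \le \big|\big|g(\bs{x}; \bs{\alpha}) - \bs{0}\big|\big|_2 = ||g(\bs{x}; \bs{\alpha})||_2 .
\end{equation*}
This reduces the whole statement to a deterministic, $f$-free upper bound on the unitized norm $||g(\bs{x}; \bs{\alpha})||_2$, after which the two expectations are handled exactly as in the scalar case.

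First I would make the transformation explicit. Since $(||\bs{x}||_2 - 1)\,\text{diag}(\bs{\alpha}) + E$ is diagonal, its inverse is diagonal too, so for $||\bs{x}||_2 > 0$ the $i$th coordinate is $g(\bs{x}; \bs{\alpha})_i = x_i / d_i$ with $d_i = (||\bs{x}||_2 - 1)\alpha_i + 1 = ||\bs{x}||_2\,\alpha_i + (1 - \alpha_i)$. Writing $r = ||\bs{x}||_2 > 0$, each $d_i > 0$ because $r\alpha_i \ge 0$ and $1 - \alpha_i \ge 0$ cannot vanish simultaneously, so $g$ is well defined and
\begin{equation*}
||g(\bs{x}; \bs{\alpha})||_2^2 = \sum_{i=1}^d \frac{x_i^2}{d_i^2} .
\end{equation*}

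The crux is bounding this sum in the two regimes. When $\alpha_* = \min_j \alpha_j > 0$, I would use $\alpha_i \ge \alpha_*$ to get the lower bound $d_i = r\alpha_i + (1 - \alpha_i) \ge r\alpha_* > 0$, whence $x_i^2 / d_i^2 \le x_i^2 / (r^2 \alpha_*^2)$; summing and using $\sum_i x_i^2 = r^2$ gives $||g(\bs{x}; \bs{\alpha})||_2 \le 1/\alpha_*$. When $\alpha_* = 0$ I would instead split on the size of $r$: if $r \ge 1$ then $d_i = 1 + \alpha_i(r-1) \ge 1$, so $||g||_2 \le r$; if $r < 1$ then, since $\alpha_i \le 1$ multiplies the negative number $r - 1$, we have $\alpha_i(r-1) \ge r - 1$ and hence $d_i \ge r$, giving $||g||_2 \le ||\bs{x}||_2 / r = 1$. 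Either way $||g(\bs{x}; \bs{\alpha})||_2 \le ||\bs{x}||_2 + 1$ (the $r = 0$ case being trivial). I expect this case split, together with keeping the direction of the inequalities straight when $r - 1$ changes sign, to be the main obstacle; the positivity of each $d_i$ also has to be checked so the bounds are legitimate.

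Finally I would feed these deterministic norm bounds back through \textbf{Fact 2}. In the regime $\alpha_* > 0$ the estimate $-1/\alpha_* \le f(g(\bs{x}; \bs{\alpha})) \le 1/\alpha_*$ yields $W(p_U^{(t+\Delta t)}, p_U^{(t)}) \le \expectation_{\bs{x} \sim p_l^{(t+\Delta)}}[1/\alpha_*] - \expectation_{\bs{y} \sim p_l^{(t)}}[-1/\alpha_*] = 2/\alpha_*$; in the regime $\alpha_* = 0$ the estimate $-(||\bs{x}||_2 + 1) \le f(g(\bs{x}; \bs{\alpha})) \le ||\bs{x}||_2 + 1$ yields $W \le \expectation_{\bs{x} \sim p_l^{(t+\Delta)}}[||\bs{x}||_2] + \expectation_{\bs{y} \sim p_l^{(t)}}[||\bs{y}||_2] + 2$. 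Combining the two regimes with the indicators $\indicator_{\min_j \alpha_j > 0}$ and $\indicator_{\min_j \alpha_j = 0}$ gives exactly the stated bound.
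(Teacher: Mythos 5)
Your proposal is correct and follows essentially the same route as the paper's proof: Fact 2 reduces everything to the pointwise bound $|f(g(\bs{x};\bs{\alpha}))| \le ||g(\bs{x};\bs{\alpha})||_2$, the norm is then bounded by $1/\min_j \alpha_j$ when $\min_j \alpha_j > 0$ and by $\max\{||\bs{x}||_2, 1\} \le ||\bs{x}||_2 + 1$ when $\min_j \alpha_j = 0$, and the expectations are taken at the end. Your per-coordinate lower bounds on the denominators ($d_i \ge r\alpha_*$ in one regime, $d_i \ge \min\{r,1\}$ in the other) are in fact a slightly cleaner bookkeeping than the paper's ambiguously written $\min_j$ step, but the underlying argument is identical.
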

\indent
\begin{proof}
	Given a $1$-Lipschitz function $f$ satisfying $f(\bs{0}) = 0$, for any $\bs{x}$ and $\bs{\alpha} = (\alpha_1, \alpha_2, \ldots, \alpha_d) \in [0, 1]^d$, 
	\begin{equation*}
	\begin{split}
	\big|f(g(\bs{x}; \bs{\alpha}))\big| \le& \big|\big|g(\bs{x}; \bs{\alpha})\big|\big|_2 = {\indicator}_{||\bs{x}||_2 = 0}(\bs{x}) \cdot 0 + {\indicator}_{||\bs{x}||_2 > 0}(\bs{y}) \cdot \Bigg[
	\sum_{i = 1}^{d} \bigg(\dfrac{x_i}{\alpha_i ||\bs{x}||_2 + (1 - \alpha_i) \cdot 1} \bigg)^2 \Bigg]^{\frac{1}{2}} \\
	\le& {\indicator}_{||\bs{x}||_2 > 0}(\bs{x}) \cdot \Bigg[\sum_{i = 1}^{d} \bigg( \dfrac{x_i}{\min_{j} \alpha_j ||\bs{x}||_2 + (1 - \alpha_j)} \bigg)^2 \Bigg]^{\frac{1}{2}} \\
	\le& {\indicator}_{||\bs{x}||_2 > 0}(\bs{x}) \cdot \dfrac{||\bs{x}||_2}{\min_{j} \alpha_j ||\bs{x}||_2 + (1 - \alpha_j)} \\
	=& {\indicator}_{||\bs{x}||_2 > 0}(\bs{x}) \cdot \bigg({\indicator}_{\min_j \alpha_j > 0}(\bs{\alpha}) \cdot \dfrac{1}{\min_{j} \alpha_j + (1 - \alpha_j) / ||\bs{x}||_2} 
	+ {\indicator}_{\min_j \alpha_j = 0}(\bs{\alpha}) \cdot \dfrac{||\bs{x}||_2}{\min_{j} \alpha_j ||\bs{x}||_2 + (1 - \alpha_j)}\bigg)\\
	\le& {\indicator}_{||\bs{x}||_2 > 0}(\bs{x}) \cdot \bigg({\indicator}_{\min_j \alpha_j > 0}(\bs{\alpha}) \cdot \dfrac{1}{\min_j \alpha_j} 
	+ {\indicator}_{\min_j \alpha_j = 0}(\bs{\alpha}) \cdot \max\big\{||\bs{x}||_2, 1\big\}\bigg) \\
	\le& {\indicator}_{\min_j \alpha_j > 0}(\bs{\alpha}) \cdot \dfrac{1}{\min_j \alpha_j} + {\indicator}_{\min_j \alpha_j = 0}(\bs{\alpha}) \cdot \max\big\{||\bs{x}||_2, 1\big\}
	\end{split}
	\end{equation*}
	which yields
	\begin{equation*}
	\begin{split}
	f(g(\bs{x}; \bs{\alpha})) \ge& -{\indicator}_{\min_j \alpha_j > 0}(\bs{\alpha}) \cdot \dfrac{1}{\min_j \alpha_j} 
	- {\indicator}_{\min_j \alpha_j = 0}(\bs{\alpha}) \cdot \max\big\{||\bs{x}||_2, 1\big\} \\
	f(g(\bs{x}; \bs{\alpha})) \le& {\indicator}_{\min_j \alpha_j > 0}(\bs{\alpha}) \cdot \dfrac{1}{\min_j \alpha_j} 
	+ {\indicator}_{\min_j \alpha_j = 0}(\bs{\alpha}) \cdot \max\big\{||\bs{x}||_2, 1\big\}
	\end{split}
	\end{equation*}
	Therefore, according to \tbf{Fact 2},
	\begin{equation*}
	\begin{split}
	W(p_U^{(t + \Delta t)}, p_U^{(t)}) 
	= &\sup_{||f||_L \le 1} {\expectation}_{\bs{x} \sim p_l^{(t+\Delta)}}\big[f(g(\bs{x};\bs{\alpha}))\big] - {\expectation}_{\bs{y} \sim p_l^{(t)}}\big[f(g(\bs{y};\bs{\alpha}))\big] \\
	= &\sup_{f: ||f||_L \le 1, f(\bs{0}) = 0} {\expectation}_{\bs{x} \sim p_l^{(t+\Delta)}}\big[f(g(\bs{x};\bs{\alpha}))\big] - {\expectation}_{\bs{y} \sim p_l^{(t)}}\big[f(g(\bs{y};\bs{\alpha}))\big] \\
	\le & {\indicator}_{\min_j \alpha_j > 0}(\bs{\alpha}) \cdot \dfrac{2}{\min_j \alpha_j} + {\indicator}_{\min_j \alpha_j = 0}(\bs{\alpha}) 
	\cdot \bigg({\expectation}_{\bs{x} \sim p_l^{(t+\Delta)}}\Big[\max\big\{||\bs{x}||_2, 1\big\}\Big] + {\expectation}_{\bs{y} \sim p_l^{(t)}}\Big[\max\big\{||\bs{y}||_2, 1\big\}\Big]\bigg) \\
	\le & {\indicator}_{\min_j \alpha_j > 0}(\bs{\alpha}) \cdot \dfrac{2}{\min_j \alpha_j} + {\indicator}_{\min_j \alpha_j = 0}(\bs{\alpha}) 
	\cdot \bigg({\expectation}_{\bs{x} \sim p_l^{(t+\Delta)}}[||\bs{x}||_2] + {\expectation}_{\bs{x} \sim p_l^{(t+\Delta)}}[1] + {\expectation}_{\bs{y} \sim p_l^{(t)}}[||\bs{y}||_2] 
	+ {\expectation}_{\bs{y} \sim p_l^{(t)}}[1]\bigg) \\
	\le & {\indicator}_{\min_j \alpha_j > 0}(\bs{\alpha}) \cdot \dfrac{2}{\min_j \alpha_j} + {\indicator}_{\min_j \alpha_j = 0}(\bs{\alpha}) 
	\cdot ({\expectation}_{\bs{x} \sim p_l^{(t+\Delta)}}[||\bs{x}||_2] + {\expectation}_{\bs{y} \sim p_l^{(t)}}[||\bs{y}||_2] + 2)
	\end{split}
	\end{equation*}
\end{proof}

\newpage
\section{Earth Mover Distance Estimation}
In the theoretical analysis, the EM distance has been used to measure ICS. However, whether there is a correlation between the performance of networks and ICS in practice is unclear. Thus, an algorithm is proposed to estimate the EM distance in network training, and the results of the following experiments yield evidence that ICS is related to the networks' performance.

\subsection{Algorithm}
By definition, the accurate EM distance is obtained by optimizing a function $f$ over the 1-Lipschitz function space:
\begin{equation*}
W(p_l^{(t+\Delta t)}, p_l^{(t)}) = \sup_{||f||_L \le 1} {\expectation}_{\bs{x}\sim p_l^{(t+\Delta t)}}[f(\bs{x})] - {\expectation}_{\bs{y} \sim p_l^{(t)}}[f(\bs{y})]
\end{equation*}
Thus, the proposed algorithm aims to get the optimal 1-Lipschitz function $f^*$ and use $f^*$ to compute the difference between the expectations. In WGAN~\cite{arjovsky2017wasserstein}, the 1-Lipschitz function $f$ is parameterized by a neural network $f_w$ with a scalar output in the algorithm. Then, the parameters of $f_w$ are bounded to satisfy the constraint $||f_w||_L \le 1$. Furthermore, the constraint can be relaxed such that $w$ is required to satisfy $||f_w||_L \le K$ for some $K > 0$, and then the EM distance is $K \cdot W(p_l^{(t + \Delta t)}, p_l^{(t)})$. For convenience, the parameter space is denoted by $\mathcal{W} = \big\{w\big|||f_w||_L \le K\big\}$. Then, a suboptimal $K$-Lipschitz function $f_w^*$ is obtained by training $f_w$ on the dataset consisting of the real and generated images. The EM distance is computed by $f_w^*$'s outputs with the real and generated images as the inputs. 

In a convolutional neural network $f_{1:L}$ with $L$ layers, the EM distance for measuring ICS is estimated in the same way. For the $l$th ($l < L$) convolutional layer of $f_{1:L}$, the multi-channel outputs can be treated as images, though there are more than $3$ channels. Thus, analogously, the proposed algorithm trains $f_w$ to maximize the same objective function of WGANs, with the training samples generated by the local networks $f_{1:l}^{(t)}$ and $f_{1:l}^{(t + \Delta t)}$ formed by the previous $l$ layers of $f_{1:L}$ at the $t$th and $t + \Delta t$th iterations, respectively. Then, the algorithm estimates the EM distance by $f_w^*$ similarly.

To formulate the process, the approximated EM distance, which replaces the 1-Lipschitz function space with $\mathcal{W}$, is defined as
\begin{equation*}
\begin{split}
\widetilde{W}(p_l^{(t + \Delta t)}, p_l^{(t)}) = \sup_{w \in \mathcal{W}} E_{\bs{x}\sim p_l^{(t + \Delta t)}}[f_w(\bs{x})] - E_{\bs{y} \sim p_l^{(t)}}[f_w(\bs{y})]
\end{split}
\label{eq:approximatedics}
\end{equation*}
Then, in practice, $\widetilde{W}(p_l^{(t + \Delta t)}, p_l^{(t)})$ is further approximated by the empirical estimation:
\begin{equation*}
\begin{split}
\widetilde{W}(p_l^{(t + \Delta t)}, p_l^{(t)}) \approx \sup_{w\in \mathcal{W}}& \frac{1}{N}\sum_{i=1}^N f_w(f_{1:l}^{(t + \Delta t)}(\bs{x}_i)) - \frac{1}{N}\sum_{i=1}^N f_w(f_{1:l}^{(t)}(\bs{x}_i))
\end{split}
\end{equation*}
where $N$ is the number of samples and $\bs{x}_i$ is a sample from the dataset for $f_{1:L}$. The algorithm is presented in Algorithm~\ref{alg:emdistance}.

\setcounter{algorithm}{2}
\begin{algorithm}[htb]
	\textbf{Input}: datasets $\mathcal{D}_{train}$ and $\mathcal{D}_{test}$, local networks $f_{1:l}^{(t + \Delta t)}$ and $f_{1:l}^{(t)}$, initialized $f_w$, parameters $T$, $n$ and $c$
	
	\textbf{Output}: estimated EM distance $d$
	
	\begin{algorithmic}[1]
		\FOR{$i \gets 1$ to $T$}
		\STATE{Sample $\{\bs{x}_j\}_{j=1}^{n}$ randomly from $\mathcal{D}_{train}$}
		\STATE{$g_w \gets \nabla_w \Big[\frac{1}{n}\sum_{j=1}^{n}\big(f_w(f_{1:l}^{(t + \Delta t)}(\bs{x}_j)) $$ - f_w(f_{1:l}^{(t)}(\bs{x}_j))\big)\Big]$}
		\STATE{Update $w$ by $g_w$}
		\STATE{$w \gets clip(w, -c, c)$}
		\ENDFOR
		\STATE{$d \gets \frac{1}{|\mathcal{D}_{test}|} \sum_{\bs{x} \in \mathcal{D}_{test}} \big(f_w(f_{1:l}^{(t + \Delta t)}(\bs{x})) - f_w(f_{1:l}^{(t)}(\bs{x}))\big)$}
	\end{algorithmic}
	\caption{EM Distance Estimation Algorithm}
	\label{alg:emdistance}
\end{algorithm}

\subsection{Experiments}
The EM distances for some specific unitization layers of ResNet-110s are estimated. The ResNet-110s, \ie, $f_{1:L}$s, are trained on CIFAR-10 and CIFAR-100 datasets with the data augmentation method~\cite{lee2015deeply}, while the $f_w$s corresponding to the specific layers are trained on the same dataset without data augmentation.

\begin{figure*}[htb]
	\centering
	\begin{subfigure}[t]{0.45\textwidth}
		\centering
		\includegraphics[width=80mm,scale=0.4]{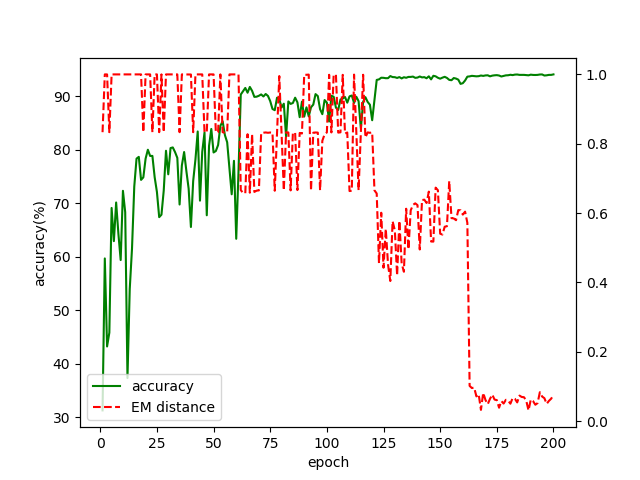}
		\caption{CIFAR-10}
	\end{subfigure}
	\begin{subfigure}[t]{0.45\textwidth}
		\centering
		\includegraphics[width=80mm,scale=0.4]{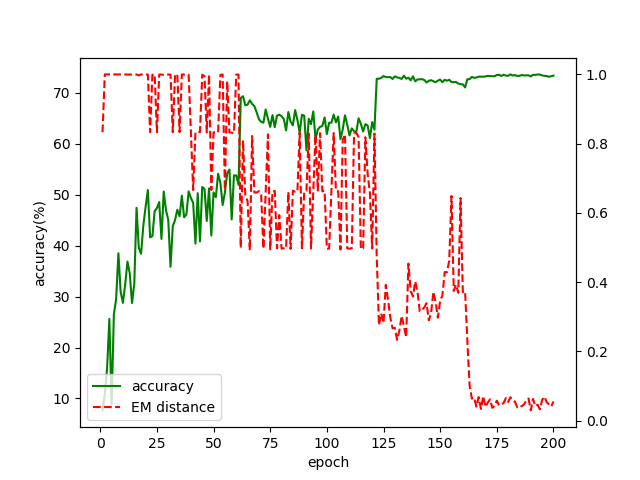}
		\caption{CIFAR-100}
	\end{subfigure}
	\caption{Illustration of test accuracy of ResNet-110s on the CIFAR datasets and the average EM distances. On both the CIFAR-10 (a) and CIFAR-100 (b) datasets, the accuracy increases significantly in about $60$th, $120$th and $160$th epochs, and the average EM distance sharply drops concurrently.}
	\label{fig:emdacc}
\end{figure*}

\tbf{Network Architectures}: Each $f_w$ is parameterized as a vanilla CNN. The inputs of $f_w$ are the outputs of the corresponding layer, followed by four $3 \times 3$ convolutions on the feature maps of sizes $\{64, 64, 128, 128\}$ with strides of $\{2, 1, 2, 1\}$, respectively. The convolutional outputs are activated using ReLU but not normalized/unitized. A fully-connected layer with a sigmoid activation is applied to generate the network's outputs. 

\tbf{Implementation Details}: The experiments of training ResNet-110s on the CIFAR datasets have been described in the paper, and the experiments of estimating the distances are described as follows. The EM distances are computed per epoch. At the end of each epoch, the weights of the current local network $f_{1:l}^{(t + \Delta t)}$ are first saved for the next epoch. Then another local network $f_{1:l}^{(t)}$ is initialized by the weights saved in the previous epoch. $f_w$ is trained by Algorithm~\ref{alg:emdistance} with these two local networks as the algorithm's inputs. For each estimated EM distance, the same hyperparameters are used: the iteration number $T$ is $1.5k$; the mini-batch size $n$ is $128$; the bound $c$ is $0.01$. $f_w$ is initialized by the same weights that are generated using the method~\cite{he2015delving}. Finally, the average EM distance for the deep layers including the $60$th, $69$th, $78$th, $87$th, $96$th and $105$th layers, along with the test accuracy of the ResNet-110s, are reported.

\tbf{Results}: As is shown in Figure~\ref{fig:emdacc}, the accuracy of each experiment significantly increases in about $60$th, $120$th and $160$th epoch as the learning rate decreases. Meanwhile, the average EM distance dramatically drops in these epochs. The results demonstrate that the EM distance is a suitable ICS measure since it drops immediately as the learning rate decreases, in which ICS obviously decreases. Then, based on the effectiveness of the EM distance in measuring ICS, the results further substantiate that ICS is related to the performance of networks. 


\vspace{15mm}
\section{Experiments of the Unitization for Micro-Batches}
The unitization is evaluated in the case of micro-batches to further demonstrate the performance. For comparison, BN and GN~\cite{wu2018group} techniques are also evaluated in this case. Only the CIFAR-10 dataset with the same data augmentation in the previous experiments are used. 

\tbf{Network Architectures}: The unitization, BN and GN are evaluated with the same ResNet-110s in the previous experiments. For GN, there are different ways of group division, according to the experiments in~\cite{wu2018group}. For convenience, denote by $G$ the number of the groups in GN, with the value in $\{1, 2, 4, 8, 16\}$. Furthermore, for a network, there are two types of settings for $G$ in each convolutional layer: fixing the numbers of (1) the groups or (2) the channels per group. Thus, there are $10$ ways of group division in total.

\tbf{Implementation Details}: The ResNet-110s are trained with the same settings as the previous experiments, except for the batch size, denoted by $s$. Unlike~\cite{wu2018group}, we experiment on \tbf{only one GPU}, and report the median accuracy of 3 runs for each experiment of $s = 2$. In fact, for the cases of $s \in \{4, 8, 16\}$, BN still outperforms GN. Thus, only the case of $s = 2$, in which BN degrades, is considered.

\begin{table}[!htb]
	\centering
	\caption{Classification accuracy ($\%$) on the CIFAR-10 dataset}
	\small
	\begin{tabular}{Sc c c c c |Sc c c c c Sc  Sc}
		\hline
		\multicolumn{5}{c|}{Groups ($G$)} & \multicolumn{5}{c}{Channels per group} & \multirow{2}{*}{BatchNorm} & \multirow{2}{*}{Unitization} \\ \cline{1-10}
		1 & 2 & 4 & 8 & 16 & 1 & 2 & 4 & 8 & 16 \\
		\Xhline{2pt}
		10.00 & 10.00 & 10.00 & 71.63 & 73.76 & 79.54 & 73.58 & 70.68 & 72.04 & 10.00 & 71.50 & \tbf{81.88} \\
		\hline
	\end{tabular}
	\label{tb:micro_batch}
\end{table}

\tbf{Results}: As is shown in Table~\ref{tb:micro_batch}, the unitization still outperforms the other techniques for micro-batches. Some networks with GN cannot even converge, where the accuracy (of only $10\%$) degrades, and BN suffers from highly noisy estimations in this case, with the almost lowest accuracy among all the comparable results ($> 10\%$). The results further demonstrate the effectiveness of the unitization in controlling ICS.

\end{document}